\documentclass[twoside,11pt]{article}
\usepackage{jair,rawfonts}

\usepackage[utf8]{inputenc}
\usepackage[T1]{fontenc}

\usepackage[hidelinks]{hyperref}
\usepackage{multirow}
\usepackage{arydshln}
\usepackage[dvipsnames]{xcolor}
\usepackage{graphicx}

\usepackage[natbibapa]{apacite}
\usepackage{enumitem}

\usepackage{caption}
\usepackage{subcaption}

\usepackage{algorithm2e}

\usepackage{amsmath,amsthm,amssymb}
\usepackage{bm}
\usepackage{bbm}

\usepackage{tikz}
\usepackage{pgfplots}
\usetikzlibrary{positioning, calc, fit, shapes, patterns, matrix}
\pgfplotsset{compat=1.14}

\pgfdeclarelayer{background}
\pgfdeclarelayer{foreground}
\pgfsetlayers{background,main,foreground}

\definecolor{obstaclegray}{gray}{0.665}
\definecolor{treegreen}{RGB}{33, 138, 33}
\definecolor{firered}{RGB}{255, 69, 0}
\definecolor{emitterpink}{RGB}{249, 128, 113}

\newtheorem{theorem}{Theorem}
\newtheorem{property}{Property}

\theoremstyle{definition}
\newtheorem{definition}{Definition}

\setlength\dashlinedash{2pt}
\setlength\dashlinegap{3pt}

\newcommand{\sublat}[2]{{\cal L} \left[ #1, #2 \right]}
\newcommand{\sublatx}[2]{{\cal L} \left[ #1, #2 \right[}

\begin{document}

    \title{Learning Pretopological Spaces to Model Complex Propagation Phenomena: A  Multiple Instance Learning Approach Based on a Logical Modeling}

    \author{
        \name Gaëtan Caillaut \email gaetan.caillaut@univ-orleans.fr \\
        \name Guillaume Cleuziou \email guillaume.cleuziou@univ-orleans.fr \\
        \addr Université d’Orléans, INSA Centre Val de Loire, LIFO EA 4022, France
    }

    \maketitle

    \begin{abstract}
        This paper addresses the problem of learning the concept of "propagation" in the pretopology theoretical formalism. Our proposal is first to define the pseudo-closure operator (modeling the propagation concept) as a logical combination of neighborhoods.
        We show that learning such an operator lapses into the Multiple Instance (MI) framework, 
        where the learning process is performed on bags of instances instead of individual instances. Though this framework is well suited for this task, its use for learning a pretopological space leads to a set of bags exponential in size. To overcome this issue we thus propose a learning method based on a low estimation of the
        bags covered by a concept under construction.
        As an experiment, percolation processes (forest fires typically) are simulated and the corresponding propagation models are learned based on a
        subset of observations.
        It reveals that the proposed MI approach is significantly more efficient on the task of propagation
        model recognition than existing methods.
    \end{abstract}

    \section{Introduction}\label{sec-intro}

Complex systems have been subject to more and more attentions for several years. They permit to model complex 
interactions between entities in a large specter of domains: organic ecosystems \citep{levin1998ecosystems, norberg2004biodiversity}, social networks \citep{levorato2014group}, air pollution \citep{athanasiadis2004agent}, 
%
%
%
%
energy management \citep{amin2005toward} or the world economy \citep{farmer2012economics, foster2005simplistic}. All these subjects are hard to model with standard tools but are critical for the human development.

Complex systems can be approached using different theories, such as fuzzy logic \citep{zadeh1973fuzzy}, graph theory 
\citep{pastor2015epidemic,hart2016graph}, rough set theory \citep{wu2004rough} and pretopology \citep{amor2006percolation} as major theoretical frameworks. The present work focuses on the pretopology theory, known for its ability to finely model
complex propagation phenomena. This theory is based on a propagation operator called \emph{pseudo-closure} which is the core of 
the pretopology theory. It is usually defined on a collection of neighborhoods and combines them in a subtle fashion.
This powerful operator offers a formalism able to describe, step by step, a non-trivial expansion process, such 
as a coalition formation in game theory \citep{auray1982fuzzy} or the spread of an epidemic \citep{liu2015complex} to mention but a few.

Most pretopology related works rely on a fixed hand pseudo-closure operator 
\citep{ahat2009pollution, van2013efficient,bui2014gesture}. In these articles, the pseudo-closure operator is defined as the conjunction of all neighborhoods.
The problem is that such an operator limits us to a single pretopological space which may not be 
suitable for each application case. Furthermore, in the common definition of the pseudo-closure operator, 
the same "credibility" and the same "usefulness" is given to each neighborhood. In applications where data is gathered from multiple
sources, this assumption is likely to be false.

Recent works address this problem by proposing more flexible solutions. Considering that a simple pretopological space cannot model any complex system, \citet{bui2015multi} introduce the concepts of
%
%
%
%
\emph{weak} and \emph{strong} pseudo-closure operators leading to \emph{thick} (i.e.~noisy) and \emph{thin} pretopological spaces respectively. In the same time, \citet{galindo2014pretopological} define a set of pseudo-closure operators, each referring to slightly different (topological) neighborhoods depending to a parameter $r$ such that the lower $r$, the thinnest the resulting pretopological space.
%
%
%
%

But these approaches to extend the pretopology theory remain limited. For instance,
there is no way to ignore or limit the impact of erroneous neighborhoods in favor of more reliable ones. Existing
works propose to use multiple pre-defined pretopological spaces, but it assure by no means that one of these spaces
or even a combination of these spaces \citep{dalud2009closed} suit the application needs.
Therefore, we need a methodology to build a pseudo-closure operator that fit with an expected behavior, given a collection of neighborhoods.
Because, in most cases, it is humanly infeasible to define such a complex pretopological 
space, the methodology has to be automatic and our proposal consists in learning the pseudo-closure operator.

\medskip

We introduce in this article a method to automatically learn a (V-type) pretopological space. 
Our method actually learns the underlying pseudo-closure operator, which defines the pretopological space.
\citet{cleuziou2015learning} were the first to tackle the problem of learning a pseudo-closure operator from 
observations. For a knowledge engineering purpose, they design a (semi)-supervised method to learn a pretopological space from which a lexical taxonomy is derived. To this end,
the authors first model the pseudo-closure operator as a linear combination of neighborhoods and then use a genetic algorithm that outputs a solution that matches best with a given (partial) structure.

The present paper intends to both generalize and enhance the original approach from \citet{cleuziou2015learning}. In this aim, three main contributions are introduced :
\begin{enumerate}
\item First of all, we generalize the problematic to any application case : the proposed generic method learns a propagation concept instead of focusing on building a structure (as a lexical taxonomy for example).
\item Then, the pseudo-closure operator is modeled in a logical formalism (rather than numerical) thus offering a wider space of combination solutions and, above all, facilitating the learning process which follows.
\item Lastly, but not leastly, the original stochastic learning strategy is depressed in favor of an efficient learning approach resulting in a multiple instance algorithm based on a greedy learning strategy. 
\end{enumerate}

The paper is organized as follows : the basics of pretopology, useful for the understanding of the problematic studied, are given in the next section (Section 2). The new modeling of the pseudo-closure operator as a logical combination (a positive disjunctive normal form (DNF)) of neighborhoods is then detailed in Section \ref{sec:pseudoclosure}.
The pseudo-closure operator is acquired by a supervised multiple instance learning 
process \citep{dietterich1997solving, zhou2004multi}. Multiple instance learning is particularly adapted
when an observation can be described by multiple instances. The need of this methodology (presented in Section \ref{sec:meth}) arises from
the fact that we learn a pretopological space based on its known \emph{elementary closed sets}. 
In the pretopology theory, a single closed set (considered as an observation in the learning problem) can be obtained by multiple propagations (multiple instances). In fact, the number of valid propagations/pseudo-closure operators is exponential in the size of the considered elementary closed sets. This leads us to a learning problem with an input set of examples exponential in size. We propose in Section \ref{sec:mi} a method to provide a (low) estimation of the number of covered positive bags (true positives) and the exact count of covered negative bags (false positives) without the need
to explicitly generate the full dataset.

Finally, Section \ref{sec:expe} is dedicated to quantitative and qualitative comparisons of our approach with the one presented in \citep{cleuziou2015learning} plus a greedy variation. We simulate a forest fire in a rectangular grid and we compare how each algorithm is capable of retrieving the underlying propagation model. The results confirm that, the new approach performs the best thanks to its more refined underlying quality measure, while keeping good performances in terms of execution time.

    \section{Basics of Pretopology}\label{sec-pretopo}

A lot of real world problems cannot be described with the tools provided by the standard topology theory.
So, the pretopology was born in the early 70s with the purpose of relaxing the topological axiomatic \citep{brissaud2011basics}.
The pretopology allows the study of the relations between elements of the powerset ${\cal P}(E)$.
A broad variety of works have shown that this theory is useful in many research fields : image analysis \citep{meziane1997satellite, bonnevay2009pretopological}
classification \citep{DBLP:conf/icpr/FrelicotL98, DBLP:journals/siu/AhatABJD10}, 
similarity or proximity detections tasks \citep{DBLP:conf/incos/LeTNP13},
or recognition of complex propagation (or expansion) phenomena \citep{DBLP:conf/rivf/AmorLL07, DBLP:conf/complex/AhatABLC09}. We put our work
in the latter area.

The pretopology theory deals with pretopological spaces which are defined as a couple $(E, a)$ with $E$ 
a non-empty finite set of elements and $a(.)$ a pseudo-closure operator. The pseudo-closure operator
$a(.)$ is a mapping from any set in ${\cal P}(E)$ to a larger one in ${\cal P}(E)$.

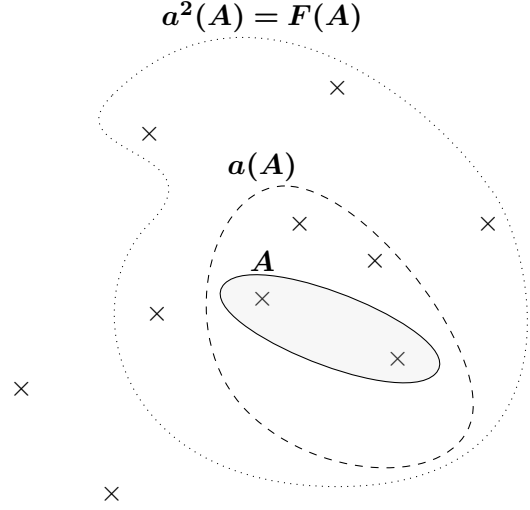
\begin{figure}
%
%
	\centering
  \begin{tikzpicture}
      \node (a) at (0, 0) {$\times$};
      \node (b) at (1.5, 0.5) {$\times$};
      \node (c) at (0.5, 1) {$\times$};
      \node (d) at (-3.2, -1.2) {$\times$};
      \node (e) at (-1.5, 2.2) {$\times$};
      \node (f) at (1.8, -0.8) {$\times$};
      \node (g) at (-1.4, -0.2) {$\times$};      
      \node (h) at (-2, -2.6) {$\times$};
      \node (i) at (3, 1) {$\times$};
      \node (j) at (1, 2.8) {$\times$};

      \coordinate (o1) at ($(a) + (-0.5, 0.2)$);
      \coordinate (o2) at ($(f) + (0.5, -0.2)$);

      \draw[fill=gray!30, fill opacity=0.2] (o1) 
      	..controls +(-0.5, -0.5) and +(-0.5, -0.5).. (o2)
      	..controls +(0.5, 0.5) and +(0.5, 0.5).. (o1);

      \coordinate (o3) at ($(c) + (-0.2, 0.5)$);
      \coordinate (o4) at ($(a) + (-0.2, -1.5)$);
      \coordinate (o5) at ($(o4) + (3, 0)$);

      \draw[dashed] (o3) 
      	..controls +(-1, 0) and +(-1, 1).. (o4)
      	..controls +(1, -1) and +(-0.1, -1).. (o5)
      	..controls +(0.1, 1) and +(1, 0).. (o3);

      \coordinate (o6) at ($(e) + (-0.5, 0.5)$);
      \coordinate (o7) at ($(a) + (+1, -2.5)$);
      \coordinate (o8) at (-1.5, 1);
      \coordinate (o9) at ($(i) + (0, 0.5)$);

	  \draw[dotted] (o6) 
      	..controls +(-0.8, -0.8) and +(1, 1).. (o8)
        ..controls +(-1, -1) and +(-3, 0).. (o7)
        ..controls +(3.5, 0) and +(0.5, -0.5).. (o9)
        ..controls +(-0.5, 0.8) and +(2, 2).. (o6);

      \node at (0, 0.5) {$\bm{A}$};
      \node at (0, 1.75) {$\bm{a(A)}$};
      \node at (0, 3.75) {$\bm{a^2(A) = F(A)}$};

  \end{tikzpicture}
  
  	\caption{The pseudo-closure expansion process, from a set $A$}
\end{figure}

\begin{definition}[Pseudo-closure operator]
	Let $(E, a)$ be a pretopological space. Then $a(.)$ is a function 
    $a : {\cal P}(E) \rightarrow {\cal P}(E)$ and is called a pseudo-closure operator.
    A pseudo-closure operator must respect two properties which are :
    \begin{enumerate}[label=\roman*)]
      \item $a(\emptyset) = \emptyset$
      \item $\forall A \in {\cal P}(E), A \subseteq a(A)$
	\end{enumerate}
\end{definition}


The pseudo-closure operator is not idempotent, it means that, unlike the closure operator in the topology theory, $a(A)$ is not necessarily equals to $a(a(A))$, where $A \in {\cal P}(E)$. The pretopological pseudo-closure operator is such that $\forall A \in {\cal P}(E), a(A) \subseteq a(a(A))$ (cf $ii$). Let $k$ be a positive integer and $A \in {\cal P}(E)$, we note $a^k(A)$ the $k$ successive 
applications of $a(.)$ on $A$ : $a^k = a \circ \overset{k-2}{\ldots} \circ a$.
Let $k$ a positive integer such as $a^k(A) = a^{k+1}(A)$, then $a^k(A)$ is called the closure of $A$, noted $F(A)$.

\begin{definition}[Closed set]
	Let $(E, a)$ be a pretopological space.
    Then $K \in {\cal P}(E)$ is a closed set if and only if $a(K) = K$.
    
    The closure of $A \in {\cal P}(E)$ is a closed set 
    noted $F(A)$ and is such that $\exists k, a^k(A) = F(A)$.
    If $A$ is a singleton, then $F(A)$ is called its \emph{elementary closure}.
\end{definition}

It is highly common to consider only a particular subset of pretopological spaces: 
V-type pretopological spaces. 
\begin{definition}[V-type pretopological space]
A pretopological space $(E, A)$ is V-typed if and only if its pseudo-closure operator $a(.)$ respects the 
isotonic property defined as follow :
\begin{enumerate}[label=\roman*),start=3]
	\setcounter{enumi}{2}
    \item $\forall A \in {\cal P}(E), \forall B \in {\cal P}(E), A \subseteq B \Rightarrow a(A) \subseteq a(B)$
\end{enumerate}
\end{definition}

These pretopological spaces have some good structuring properties, for example, 
the intersection of closed sets is a closed set ; \citet{DBLP:journals/isci/LargeronB02} rely on these properties to define an algorithm that structures the elements of a V-type pretopological space $(E, a)$ into a DAG. This algorithm is the cornerstone of the work of \citet{cleuziou2015learning} by considering a lexical taxonomy as a DAG structuring natural language terms.

The V-type isotonic property will be also of crucial importance for the pseudo-closure learning framework we propose in the remaining. This property allows us to efficiently learn a pretopological space: as we will see in section \ref{sec:mi}, it allows to accurately estimate the number of positive and negative bags covered by a solution under construction. As a consequence, we will consider only V-type pretopological spaces in the rest of this article.

    \section{Pseudo-closure operator}\label{sec:pseudoclosure}

The pseudo-closure operator is often defined by a neighborhood on a given set $E$.
Given a neighborhood function $V : E \rightarrow {\cal P}(E)$, it is possible to derive a pseudo-closure operator such as
$a(A) = \{ x \in E \mid V(x) \cap A \neq \emptyset \},~\forall A \in {\cal P}(E)$. This operator expands a set $A \in {\cal P}(E)$ to any element $x \in E$ whose neighborhood intersects $A$. Though it is a quite common way to define a pseudo-closure operator, it is not the most interesting way to capitalize on the pretopology theory.
Indeed, the pretopology theory reveals its power in the multiple-criteria context, that is to say when the 
pseudo-closure operator is defined by a combination of multiple neighborhoods.

So, considering a collection of $k$ neighborhoods functions ${\cal V} = \{V_1, \ldots, V_k\}$ on $E$, \citet{bui2015multi} define the concepts of \emph{strong} and \emph{weak} pseudo-closure operators combining the neighborhoods. These two pseudo-closure operators, noted respectively $a_{strong}$ and $a_{weak}$, are given by:
\begin{align*}
	a_{strong}(A) &= \{ x \in E \mid \forall V \in {\cal V}, V(x) \cap A \neq \emptyset \},~\forall A \in {\cal P}(E)\\
    a_{weak}(A) &= \{ x \in E \mid \exists V \in {\cal V}, V(x) \cap A \neq \emptyset \},~\forall A \in {\cal P}(E)
\end{align*}
Where $a_{strong}(A)$ expands $A$ to any element $x \in E$ such that all of its neighborhoods $V_i(x)$ intersect $A$ ; $a_{weak}(A)$ expanding $A$ to any element $x \in E$ such that at least one of the $k$ neighborhoods $V_i(x)$ intersects $A$.

In both cases, it is assumed that each neighborhood is equally necessary or sufficient for the expansion model. This assumption is not satisfied by the large majority of real world cases. Thus, research was carried out about a methodology aiming to build a pseudo-closure operator based on multiple neighborhoods and designed in such a way that useful neighborhoods are picked and combined together while erroneous ones are discarded.



\citet{cleuziou2015learning} were the first tackling this problem by proposing to learn the combination 
of neighborhoods that defines the pseudo-closure operator (and hence its associated pretopological space).
This section first recalls the (weighted) pseudo-closure definition proposed by \citet{cleuziou2015learning}, then points the limitation of such a numerical and linear modeling and finally details a new proposal based on the logical formalism.

\subsection{Weighted modeling of the pseudo-closure}

In \citep{cleuziou2015learning}, the authors learn a pretopological space, from which a lexical taxonomy is derived by a pretopological structuring algorithm \citep{DBLP:journals/isci/LargeronB02}.

Their approach consists in learning a vector of weights $\bm{w}=(w_0,\dots,w_k)$ where each weight $w_i$ is associated to one neighborhood $V_i \in {\cal V}$, and  $w_0$ is a bias parameter. They define the notion of \emph{parametrized pseudo-closure} as a pseudo-closure
operator based on the vector $\bm{w}$, noted $a_{\bm{w}}(.)$ and defined by
\begin{equation}
    a_{\bm{w}}(A) = \{ x \in E \mid \sum_{i=1}^{k} w_i \cdot \mathbbm{1}_{V_i(x) \cap A \neq \emptyset} \ge w_0 \}, \forall A \in {\cal P}(E)
\end{equation}

Such a parametrized pseudo-closure expands a subset $A \in {\cal P}(E)$ to the elements $x \in E$ for which there exists a subset of neighborhoods $\cal V' \subseteq V$ such that :
\begin{enumerate}
\item any neighborhood in ${\cal V}'$ intersects $A$ ($V(x) \cap A \neq \emptyset, \forall V \in {\cal V}'$) and 
\item the sum of the weights associated to the neighborhoods in $\cal V'$ is greater than $w_0$.
\end{enumerate}

\citet{cleuziou2015learning} show that the pretopological space $(E, a_{\bm{w}})$ is guaranteed to be V-typed if and only if $\bm{w}$ satisfies the three following constraints :
\[w_0 > 0 ~~,~~\sum_{i=1}^{k}w_i \ge w_0 ~~\text{and}~~w_i \ge 0, \forall i\]

\medskip


This definition of the pseudo-closure operator generalizes existing combination-based operators since it allows to retrieve both the {\em weak} and the {\em strong} pseudo-closure operators from \citet{bui2015multi}.
On the one hand, the weak pseudo-closure operator is obtained by giving a value greater than $w_0$ to each weight ; as a consequence, each neighborhood in $\cal V$ is considered as a sufficient (trustworthy) justification to trigger the expansion process.
On the other hand, the strong pseudo-closure operator is obtained when every weight is required to stand the threshold $w_0$ (for example, giving the value $\frac{w_0}{k}$ to each weight).
Thus, the propagation is performed if and only if each neighborhood $V \in {\cal V}$ allows it.

The expressiveness is not reduced by switching from a standard pseudo-closure operator to a weighted modeling. Even better, 
the new modeling actually brings (a lot) in expressiveness because many more pretopological spaces can be represented by
attributing any other valid value to the vector $\bm{w}$.
Furthermore, this representation allows to give more or less importance to a neighborhood $V_i$ according to its weight $w_i$: a small (resp. great) value modeling a low (resp. high) confidence or significance assigned to the corresponding neighborhood.

\bigskip

Although learning a parameterized pseudo-closure operator is a great evolution in the pretopology theory uses, two major limitations remain. First, the considered hypothesis space is unadapted to the problem :
there is a finite number of pretopological spaces defined on $E$, but the considered set of weighting vectors is infinite. It implies a lot of redundancy in the hypothesis space and it makes its exploration inefficient. Then, the expressiveness of the linear modeling is limited. The parameterized pseudo-closure operator defines the propagation concept
(i.e. $x \in a_{\bm{w}}(A)$), by the following threshold function $f()$:
\begin{equation*}
	\forall A \in {\cal P}(E), x \in E,
	f(A, x) = 
    \begin{cases}
    	1 \text{ if } \sum_{\{ i \mid V_i(x) \cap A \neq \emptyset \}} w_i \ge w_0 \\
        0 \text{ otherwise}
    \end{cases}
\end{equation*}
Then $a_{\bm{w}}(.)$ can be expressed as simply as $\forall A \in {\cal P}(E), a_{\bm{w}}(A) = \{ x \in E \mid f(A, x) = 1\}$.
Such a linear function defines an hyperplane that makes only linearly separable problems to be solved. Unfortunately, not every propagation phenomena can be expressed as a linear model.
%
%
%
%


\begin{figure}
    \centering

    \begin{tikzpicture}[graph_name/.style={}, graph_node/.style={circle, draw, outer sep=1mm}, >=stealth]
        \coordinate (xshift) at (4, 0);
        \coordinate (yshift) at (0, -4);

        \node (ref1) at (0, 2) {$S^*_1$};
        \node (refa1) at (1, 1.85) {a};
        \node (refb1) at (0.5, 1) {b};
        \node (refc1) at (1.5, 1) {c};
        
        
        \node[circle, draw, inner sep=0.5mm, outer sep=1mm] (refd1) at (1, 0.5) {d};
		\node[ellipse, inner sep=0mm, draw, rotate fit=45, fit={(refc1) (refd1)}] (Fc1) {};
        \node[ellipse, inner sep=0.2mm, draw, rotate fit=20, fit={(Fc1) (refb1)}] (Fb1) {};
		\node[ellipse, inner sep=0mm, draw, rotate fit=40, inner sep=-1mm, fit={(Fb1) (refa1)}] {};

        \node (ref2) at ($(ref1) + (yshift)$) {$S^*_2$};
        \node (refa2) at ($(refa1) + (yshift)$) {a};
        \node (refb2) at ($(refb1) + (yshift)$) {b};
        \node[circle, draw, inner sep=0.5mm, outer sep=1mm] (refc2) at ($(refc1) + (yshift)$) {c};
        \node[circle, draw, inner sep=0.5mm, outer sep=1mm] (refd2) at ($(refd1) + (yshift)$) {d};

        \node[ellipse, inner sep=0.2mm, draw, rotate fit=20, fit={(refb2) (refc2) (refd2)}] (Fb2) {};
		\node[ellipse, inner sep=0mm, draw, rotate fit=40, inner sep=-1mm, fit={(Fb2) (refa2)}] {};

        \node[graph_name] (V1) at ($(ref1) + (xshift)$) {$V_1$};
        \node[graph_node] (a1) at ($(1, 2) + (xshift)$) {a};
        \node[graph_node] (b1) at ($(0, 1) + (xshift)$) {b};
        \node[graph_node] (c1) at ($(2, 1) + (xshift)$) {c};
        \node[graph_node] (d1) at ($(1, 0) + (xshift)$) {d};
        \path[->] 
            (a1) edge[bend right=10] (b1) 
            (b1) edge[bend right=10] (a1) 
            (d1) edge[bend left=10] (b1);

        \node[graph_name] (V2) at ($(V1) + (xshift)$) {$V_2$};
        \node[graph_node] (a2) at ($(a1) + (xshift)$) {a};
        \node[graph_node] (b2) at ($(b1) + (xshift)$) {b};
        \node[graph_node] (c2) at ($(c1) + (xshift)$) {c};
        \node[graph_node] (d2) at ($(d1) + (xshift)$) {d};
        \draw[->] 
            (b2) edge[bend right=10] (a2) 
            (b2) edge[bend right=10] (c2)
            (d2) edge[bend right=10] (c2);


        \node[graph_name] (V3) at ($(V1) + (yshift)$) {$V_3$};
        \node[graph_node] (a3) at ($(a1) + (yshift)$) {a};
        \node[graph_node] (b3) at ($(b1) + (yshift)$) {b};
        \node[graph_node] (c3) at ($(c1) + (yshift)$) {c};
        \node[graph_node] (d3) at ($(d1) + (yshift)$) {d};
        \path[->] 
            (d3) edge[bend right=10] (c3)
            (c3) edge[bend right=10] (b3);

        \node[graph_name] (V4) at ($(V1) + (xshift) + (yshift)$) {$V_4$};
        \node[graph_node] (a4) at ($(a1) + (xshift) + (yshift)$) {a};
        \node[graph_node] (b4) at ($(b1) + (xshift) + (yshift)$) {b};
        \node[graph_node] (c4) at ($(c1) + (xshift) + (yshift)$) {c};
        \node[graph_node] (d4) at ($(d1) + (xshift) + (yshift)$) {d};
        \path[->] 
            (a4) edge[bend right=10] (b4) 
            (b4) edge[bend right=10] (c4)
            (c4) edge[bend right=10] (b4);

        \draw (a1.north) ++ (2, 0) -- ($(d3.south) + (2, 0)$);
        \draw (b1.west) ++ (0, -2) -- ($(c2.east) - (0, 2)$);
    \end{tikzpicture}

    \caption{Two target pretopological spaces $(S^*_1,S^*_2)$ and four neighborhood relations $(V_1,V_2,V_3,V_4)$ on a set $E=\{a,b,c,d\}$.}
    \label{fig:example}
\end{figure}
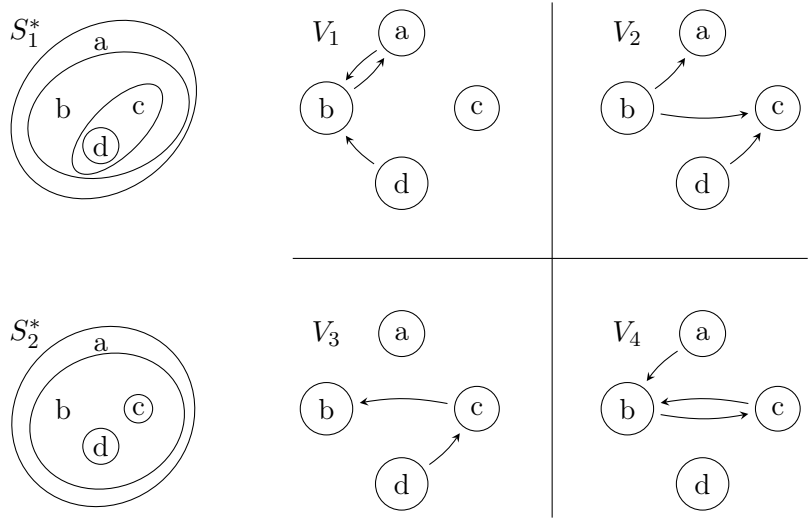

As an example, let's consider the collection of neighborhoods ${\cal V} = \{V_1, V_2, V_3, V_4\}$ and the two 
V-type compliant sets of elementary closures $S^*_1$ and $S^*_2$ illustrated in Figure \ref{fig:example}.
The four neighborhoods are represented as four directed graphs such that an edge from $x$ to $y$ in $V_i$ means $y\in V_i(x)$ ; reflexive edges being implicit\footnote{In order to ensure the required growth property ($x \in a(x),~\forall x \in E$), any node is considered having a reflexive edge, 
which is not represented on the figure for the sake of legibility.}. We consider the problem of defining one neighborhood combination for each of the two target pretopological structuring $S^*_1$ and $S^*_2$, whose elementary closures are explained in Table \ref{tab:closures}.

\begin{table}
  \centering
  \begin{tabular}{c|c|c}
      $x \in E$ & $F^*_1(x)$ & $F^*_2(x)$ \\
      \hline

      $a$ & $\{ a, b, c, d \}$ & $\{a, b, c, d \}$\\    
      $b$ & $\{ b, c, d \}$ & $\{ b, c, d \}$\\
      $c$ & $\{ c, d \}$ & $\{ c \}$\\
      $d$ & $\{ d \}$ & $\{ d \}$\\
  \end{tabular}
  
  \caption{Elementary closures of $S^*_1$ and $S^*_2$}
  \label{tab:closures}
\end{table}


In order to illustrate the two limitations stated above (redundancy and expressiveness),
we show that the linear weighting approach exposes an infinity of equivalent solutions to retrieve $S^*_1$ (redundancy) but no solution to build $S^*_2$ while non-linear combinations are suitable (lack of expressiveness).

\noindent{\bf Redundancy.} A solution that retrieves perfectly the target structuring $S^*_1$ is the 
vector $\bm{w} = (1, 0.5, 0.5, 1, 0)$. It is easier (and equivalent) to represent $\bm{w}$ by the 
logical formula $V_1 \wedge V_2 \vee V_3$ meaning that $x \in a_{\bm{w}}(A)$ if and only if at least one of the two following conditions is satisfied :
\begin{enumerate}
\item both $V_1$ and $V_2$ allows it (i.e. both $V_1(x)$ and $V_2(x)$ intersect $A$, not necessarily on the same element(s)),
%
%
%
%
\item $V_3$ allows it (i.e. $V_3(x)$ intersects $A$).
\end{enumerate}

The vector $\bm{w}$ is then used to define the pseudo-closure operator $a_{\bm{w}}(.)$ 
behaving as follow:
\begin{align*}
	\{a\} &\overset{V_1 \wedge V_2}{\rightarrow} \{a, b\} 
    	\overset{V_3}{\rightarrow} \{a, b, c \}
        \overset{V_1 \wedge V_2 \text{ and } V_3}{\rightarrow} \{a, b, c, d \}=F^*_1(\{a\})\\
    \{b\} &\overset{V_3}{\rightarrow} \{b, c \}
    	\overset{V_1 \wedge V_2 \text{ and } V_3}{\rightarrow} \{b, c, d\}=F^*_1(\{b\}) \\
    \{c\} &\overset{V_3}{\rightarrow} \{c, d\}=F^*_1(\{c\})\\
    \{d\} &\overset{\emptyset}{\rightarrow} \{d\}=F^*_1(\{d\})
\end{align*}
where an arrow describes one propagation step from an element $x \in E$ to its closure $F_{\bm{w}}(\{x\})$, and the clauses above each are the ones causing the propagation. Let us observe that it is easy to find another vector $\bm{w'}\neq \bm{w}$ defining an operator $a_{\bm{w'}}$ similar to $a_{\bm{w}}$ (e.g. $\bm{w'} = (1, 0.4, 0.6, 1, 0)$).
%
%
%
%
Actually, the operator $a_{\bm{w}}$ is derived from the (infinite) set of vectors on $[0,1]^5$ satisfying the following constraints : $$w_0 > 0,~~ w_1 < w_0,~~ w_2 < w_0,~~w_1 + w_2 \ge w_0,~~ w_3 \ge w_0,~~ w_4 < w_0-\max\{w_1,w_2\}$$


\noindent{\bf Expressiveness.} The target structuring $S^*_2$ is close to $S^*_1$ since it differs only on the closure $F^*_2(\{c\})$ that excludes $d$. By observing that $V_3$ were responsible of the propagation $\{c\} \rightarrow \{c,d\}$ in the previous model, we build a new solution by using $V_4$ to constrain the second propagation condition thus leading to the new satisfying logical formula $(V_1 \wedge V_2) \vee (V_3 \wedge V_4)$. One can show that this formula is the only satisfying solution and it is well known that the corresponding boolean function $v_1v_2+v_3v_4$ is not a linearly separable function and thus cannot be formalized with the weighting modeling.

\subsection{A logical approach}

To overcome the two main limitations of the weighted modeling exposed previously (redundancy and expressiveness), we propose to define a combination of neighborhoods as a logical formula in a disjunctive normal form (DNF). We limit our work to positive DNFs, that is to say DNFs without negation of literals.

Such a modeling requires to translate the information provided by a neighborhood to a logical concept ; so we define one logical predicate $q_i : {\cal P}(E) \times E \rightarrow \{0, 1\}$ for each neighborhood $V_i$, determining whether the neighborhood $V_i(x)$ of an element $x$ intersects a subset $A\subseteq E$.

\begin{equation}
	\forall V_i \in {\cal V}, A \in {\cal P}(E), x \in E,\, q_i(A, x) \equiv V_i(x) \cap A \neq \emptyset
\end{equation}
%


%
Then, given a positive DNF $Q$ defined on the language of predicates $\{q_i\}_{i=1}^k$, 
%
we define the pseudo-closure operator $a_Q(.)$ derived from $Q$ :
\begin{equation}
	\forall A \in {\cal P}(E), a_Q(A) = \{ x \in E \mid Q(A, x) \}
\end{equation}

\begin{theorem}
Let $Q$ be a positive DNF, the pretopological space $(E, a_Q)$ derived from $Q$ is of V-type.
\end{theorem}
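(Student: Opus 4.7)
The plan is to verify the three defining properties of a V-type pretopological space, namely (i) $a_Q(\emptyset)=\emptyset$, (ii) $A\subseteq a_Q(A)$ for every $A\in{\cal P}(E)$, and (iii) the isotonic property $A\subseteq B\Rightarrow a_Q(A)\subseteq a_Q(B)$. The whole argument will hinge on a single observation about the atomic predicates $q_i$: because intersection commutes nicely with inclusion, each predicate $q_i(\cdot,x)$ is monotone in its first argument and vanishes on $\emptyset$, and because a positive DNF combines the $q_i$ using only $\wedge$ and $\vee$ (no negation), these properties lift directly to $Q$.

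Concretely, I would first establish the following three elementary facts about each $q_i$, viewed as a function of $A$ with $x$ fixed:
\begin{enumerate}[label=(\alph*)]
\item $q_i(\emptyset,x)=0$, since $V_i(x)\cap\emptyset=\emptyset$;
\item if $x\in A$ then $q_i(A,x)=1$, because the reflexive convention $x\in V_i(x)$ (recalled in the footnote to ensure the growth axiom) gives $x\in V_i(x)\cap A$;
\item if $A\subseteq B$ then $q_i(A,x)\le q_i(B,x)$, because $V_i(x)\cap A\subseteq V_i(x)\cap B$.
\end{enumerate}

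Next, write $Q$ in the canonical form $Q=\bigvee_{j=1}^m \bigwedge_{i\in I_j} q_i$, where each $I_j\subseteq\{1,\dots,k\}$ is non-empty. From (a), plugging $A=\emptyset$ makes every conjunct $\bigwedge_{i\in I_j} q_i(\emptyset,x)$ equal to $0$, so $Q(\emptyset,x)=0$ for every $x$, yielding (i). From (b), plugging any $x\in A$ makes every $q_i(A,x)$ equal to $1$, hence every conjunction is $1$ and so is $Q(A,x)$, yielding (ii). For (iii), given $A\subseteq B$ and $x\in a_Q(A)$, property (c) gives $q_i(A,x)\le q_i(B,x)$ for every $i$; a routine induction on the syntax of positive formulas (or simply the observation that both $\wedge$ and $\vee$ are monotone in each argument) shows that $Q$ inherits this monotonicity, so $Q(A,x)=1$ implies $Q(B,x)=1$, i.e.\ $x\in a_Q(B)$.

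There is no real obstacle here; the proof is essentially a verification. The only point that warrants explicit mention is the use of the reflexive-edge convention in step (b), since without it the growth property could fail for a DNF whose every clause contains at least one predicate, which is the case for any non-trivial positive DNF. Writing the monotonicity of positive DNFs as a short inductive lemma (or just stating it as a well-known fact about monotone Boolean formulas) keeps the argument compact.
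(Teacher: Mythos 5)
Your proof is correct, and its heart---the monotonicity of each predicate $q_i(\cdot,x)$ in the first argument (since $V_i(x)\cap A\subseteq V_i(x)\cap B$) lifted through the negation-free $\wedge/\vee$ structure of $Q$---is exactly the isotony argument the paper gives. The only difference is that you additionally check the two pretopological axioms $a_Q(\emptyset)=\emptyset$ and $A\subseteq a_Q(A)$ (the latter via the reflexive-edge convention $x\in V_i(x)$ and the assumption that every clause is non-empty), which the paper leaves implicit and proves nothing about; this makes your verification slightly more complete while following the same route.
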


\begin{proof}
	The pretopological space $(E, a_Q)$ is of V-type 
    if and only if
    $\forall A,B \in {\cal P}(E), A \subseteq B \Rightarrow a_Q(A) \subseteq a_Q(B)$ (isotony property).
    Let $c_j$ be the $j$-th (conjunctive) clause of $Q$ and $q_{ji}$ the $i$-th literal in $c_j$.
    \begin{align*}
    	x \in a_Q(A) &\Leftrightarrow Q(A, x) \\
        &\Leftrightarrow \exists j, c_j(A, x) \\
        &\Leftrightarrow \exists j \forall i, q_{ji}(A, x)
    \end{align*} 
    Noticing that the predicates $q_{ji}(A, x)$ are increasing functions (if $A \subseteq B$ then $q_{ji}(A, x) \Rightarrow q_{ji}(B, x)$) this property is satisfied by any conjunction $c_j(A,x)$ and then by the DNF $Q(A,x)$. 
    So $(E, a_Q)$ is a V-type pretopological space.
\end{proof}

\medskip

Let us notice that, in the following, we only consider well-formed (or simplified) DNFs. That is to say, DNFs satisfying to the 
following properties:
\begin{itemize}
	\item A clause is present only once
    \item A clause is not subsumed by another (more general) one
\end{itemize}
For instance, $Q_1 = q_1 \vee q_1$ and $Q_2 = q_1 \wedge q_2 \vee q_1$ are both ill-formed DNF.
$Q_1$ is ill-formed since the clause $q_1$ is present two times and
$Q_2$ is ill-formed too because $q_1 \wedge q_2 \Rightarrow q_1$, thus $q_1 \wedge q_2$ is subsumed by $q_1$.
$Q_3 = q_1$ is the only well-formed DNF equivalent to both $Q_1$ and $Q_2$.
This simplification reduces the hypothesis space since it would, as the weight vector model, be 
infinite otherwise.

\medskip

\newpage

We have just introduced a new logical model to represent a combination of neighborhoods $\cal V$ on a 
given finite non-empty set $E$.
This combination is a logical formula in positive disjunctive normal form used to define
a decision function $Q(A, x)$.
This decision function allows to build a V-type pretopological space $(E, a_Q)$ by defining the result 
of the pseudo-closure operator applied on a subset $A$ as the set of all elements ($x$) satisfying the decision function $Q(A,x)$.
Let us now show that the new proposed model is more expressive than the previous weighted model.

\begin{property}
	Any weighted pseudo-closure operator $a_w()$ can be reformulated as a logical pseudo-closure $a_Q()$.
\end{property}

\begin{proof}
	Let's consider a set $E$, a collection of $k$ neighborhoods $\cal V$, a weight vector $\bm{w} = (w_0, w_1, \ldots, w_k)$ and a positive DNF $Q$ built by putting one (conjunctive) clause for each combination of weights whose sum is greater or equal to $w_0$. Then the threshold function $f$ is equivalent to $Q$.
    
When $f(A, x) = 1$, it means that there is a combination of neighborhoods ${\cal V}' = \{ V \in {\cal V} \mid V(x) \cap A \neq \emptyset \}$ such that the sum of their associated weights is greater than $w_0$. By how we built $Q$, the logical conjunction $\bigwedge_{V_i \in {\cal V}'}q_i(A, x)$ belongs to $Q$ and is true (by definition of $\cal V'$).

On the contrary, $f(A, x) = 0$ means that there is no combination of neighborhoods ${\cal V}' = \{ V \in {\cal V} \mid V(x) \cap A \neq \emptyset \}$ such that the sum of their associated weights is greater than $w_0$. By how we built $Q$, there is also no conjunction $c$ such that $c(A, x) = 1$.
\end{proof}

As an example, let's consider the vector $\bm{w_1} = (1, 0.5, 0.5, 1)$ defined previously
to retrieve the elementary closed sets of $S^*_1$ (we omit $w_4=0$ for simplicity purposes). 
There are five combinations of weights whose sum is greater or equal to $w_0$ : $(w_1, w_2)$, $(w_1, w_3)$, $(w_2, w_3)$, $(w_3)$ and $(w_1, w_2, w_3)$. So we define our equivalent DNF to be 
$Q_1 = (q_1 \wedge q_2) \vee (q_1 \wedge q_3) \vee (q_2 \wedge q_3) \vee (q_3) \vee (q_1 \wedge q_2 \wedge q_3)$.
This DNF is obviously ill-formed, but it has the same meaning than $\bm{w_1}$. We simplify
this formula to be well-formed (and more readable): $Q_1 = (q_1 \wedge q_2) \vee q_3$.


Property 1 establishes that the logical pseudo-closure modeling is at least as expressive than the weighted model. But one of the strong motivation of such a new formalization is its ability to improve the expressiveness since some DNF cannot be expressed as a weight vector.
We encountered such a case earlier when we tried to find a linear model capable of retrieving the elementary closed sets in $S^*_2$. We already proposed a DNF suiting our 
needs: $Q_2 = (q_1 \wedge q_2) \vee (q_3 \wedge q_4)$ that cannot be modeled as a weight vector.

To conclude on this section, we introduced a simple and versatile model for the pseudo-closure operator that is based on a logical
formalism. Our model is versatile because it is capable to express a wider variety 
of V-type pretopological spaces than existing modeling. It is simple since the space of all the DNF is finite and easier to explore contrary to the (infinite) space of weighted vectors. It is also simple because a DNF is much more human understandable than a weight vector.

    \section{Learning methodologies}\label{sec:meth}

\citet{cleuziou2015learning} propose the \emph{Learn Pretopological Space} (LPS) algorithm which makes use of a genetic algorithm. Their method relies on a quality measure, we call it \emph{extrinsic measure}, which is maximized by the outputted solution. We expose in this section our approach to learn a positive DNF by means of the same genetic algorithm, but in a supervised context, instead of a semi-supervised context as in the original work.
Because this kind of algorithm suffers from high complexity, we then introduce a greedy learning approach that reduces the complexity issue. Finally, we show that the extrinsic measure is not suitable for such a greedy approach in the context of the LPS task. 

\subsection{The extrinsic measure}

Given a finite set of elements $E$, a set of target elementary closed sets $S^*=\{F^*(\{ x \})\}_{x\in E}$ and a collection of neighborhoods $\cal V$, the LPS task consists in learning a pretopological space $(E,a)$ whose underlying structuring $S=\{F(\{ x \})\}_{x \in E}$ (i.e.~its elementary closed sets) best matches with $S^*$. A quality measure, quantifying the degree of matching, is thus
necessary for leading the learning process.

\citet{cleuziou2015learning} resorted to the F-measure that computes a trade off  between the accuracy and the completeness in the comparison of two sets of closures.
The accuracy is defined as the precision of the solution, and the completeness by the recall of the solution:
\begin{align*}
    Precision(S^*, S) &= \frac{\sum_{x \in E}|F^*(\{ x \}) \cap F(\{ x \})|}{\sum_{x \in E}|F(\{ x \})|} \\
    Recall(S^*, S) &= \frac{\sum_{x \in E}|F^*(\{ x \}) \cap F(\{ x \})|}{\sum_{x \in E}|F^*(\{ x \})|}
\end{align*}
The precision (resp. recall) is defined as the ratio between the number of correctly retrieved
elements in the learned closures ($S$) and the total number of elements in the learned closures $S$ (resp. in the target closures $S^*$).
The final F-measure is defined as the harmonic mean between the precision and the recall: $2 \cdot \frac{Recall \cdot Precision}{Recall + Precision}$. Figure \ref{fig:measures} illustrates the computation of the F-measure by comparing a "closure function"\footnote{The term {\em closure function} refers to the function that associates to each element $x$ its (elementary) closure $F(x)$.} $F(\{ x \})$ to a target one $F^*(\{ x \})$ on a toy set $E$. 

\begin{figure}
    \begin{minipage}{.54\textwidth}
    	\centering
        \begin{tabular}{c|c|c|c}
            $x \in E$ & $F^*(\{ x \})$ & $F(\{ x \})$ & $\left| F^*(\{ x \}) \cap F(\{ x \}) \right|$\\
            \hline
            $a$ & $\{a, b, c, d\}$ & $\{a, c, d\}$ & 3\\
            $b$ & $\{b, c, d\}$ & $\{a, b, c, d\}$ & 3\\
            $c$ & $\{c\}$ & $\{c, d\}$ & 1\\
            $d$ & $\{d\}$ & $\{c, d\}$ & 1\\
        \end{tabular}
    \end{minipage}
    \begin{minipage}{.45\textwidth}
    	\centering
        \begin{align*}
            Precision &= \frac{8}{11} \approx 0.72 \\
            Recall &= \frac{8}{9} \approx 0.88 \\
            F\!-\!measure &\approx 0.79
        \end{align*}
    \end{minipage}
    \caption{Example of F-measure computation between two sets of closures.}
    \label{fig:measures}
\end{figure}

\medskip


In the specific context of the LPS problematic, we name such a quality measure as \emph{extrinsic measures} since it evaluates only (with an external point of view) the final elementary closures revealed by the structuring processes, rather than the (internal or \emph{intrinsic}) structuring process itself.

\subsection{Genetic LPS}

For clarity purposes, we present two variants of LPS : the Numerical Genetic LPS (the original one from \citet{cleuziou2015learning}) and
the Logical Genetic LPS whose output is a DNF-based pretopological space. 
We will refer to both of these algorithms as the Genetic LPS framework.

The Genetic LPS framework aims to learn a pretopological space by mean of genetic algorithms.
Given a set of target elementary closed sets $S^*$ and a collection of neighborhoods $\cal V$, it outputs the combination of 
neighborhoods (a vector or a DNF, according to which algorithm is considered) defining a pseudo-closure operator leading to the highest extrinsic measure on the resulting elementary closed sets $S$. Algorithm \ref{algo:glps} provides a pseudo-code of the generic Genetic LPS framework.



In order to compute the extrinsic quality of a solution $Q$, the set $S_Q$ of the elementary closed sets in the pretopological space $(E, a_Q)$ must be computed. This structuring step is the most expensive operation we rely on, hence we choose to define the complexity of the LPS algorithms in term of the number of required structuring steps. 

Genetic algorithms have a tendency to need a lot of iterations to output their solution, making their execution a bit slow. The quality of their outputs is often tied to the size of their initial population, which is itself related to the time needed to converge toward an acceptable solution. For example, Algorithm \ref{algo:glps} exposes a complexity (in terms of number of structuring) of $O(initial\_pop \cdot max\_iter)$ with $initial\_pop$ usually high.
Moreover, due to the stochastic nature of these algorithms, multiple executions can lead to different outputs.
These are the reasons why we introduce in the next subsection a less expensive (greedy) learning approach.

\begin{algorithm}
	\DontPrintSemicolon
    \SetKwProg{Fn}{Function}{}{end}
    
    \Fn{GeneticLPS (E, $S^*$, $\cal V$, max\_iter, initial\_pop, required\_iter\_convergence)}{
    	$iter \gets 0$ \;
    	$iter\_conv \gets 0$ \;
        $score \gets 0$ \;
        $is\_terminated \gets false$ \;
        $\tilde{Q} \gets \emptyset$ \;
        
        $population \gets initial\_pop$ random solutions \;
        
        \While{$iter < max\_iter$ and $\neg is\_terminated$}{
			$iter \gets iter + 1$ \;        
        	$scores \gets []$ \;
            \ForEach{$Q \in population$}{
            	$S_Q \gets structuring(E,{\cal V},Q)$\; 
            	Append $extrinsic\_measure(S^*,S_Q)$ to $scores$ \;
            }
            
            $best\_score \gets$ highest value in $scores$ \;
            $best\_individual \gets$ the individual exposing the highest score \;
            \If{$best\_score = score$}{
				$iter\_conv \gets iter\_conv + 1$ \;
                \If{$iter\_conv = required\_iter\_convergence$}{
                	$is\_terminated \gets true$ \;
                }
            }
            \ElseIf{$best\_score > score$}{
				$iter\_conv \gets 1$ \;
            	$score \gets best\_score$ \;
                $\tilde{Q} \gets best\_individual$ \;
            }
            
            $population \gets$ cross and mutate best individuals \;
        }
        
        \Return{$\tilde{Q}$}
    }
    
    \caption{Genetic LPS}
    \label{algo:glps}
\end{algorithm}

\subsection{Greedy LPS}

Greedy LPS is a greedy variant of the Logical Genetic LPS algorithm: given a set of target elementary closed sets $S^*$ and a collection of neighborhoods $\cal V$, it outputs a combination of the neighborhoods in $\cal V$ as a positive DNF $Q$. It uses a greedy heuristic that aims to accelerate the learning process.

\begin{algorithm}
	\DontPrintSemicolon
    \SetKwProg{Fn}{Function}{}{end}
    
    \Fn{best\_clause (E, $S^*$, $\cal V$, Preds, Q, base\_clause)}{
    	\If{$|base\_clause| \ge |Preds|$}{
        	\Return{$\emptyset$} \;
        }
    	
        $best\_clause \gets \emptyset$ \;
        $best\_score \gets 0$ \;

        \ForEach{$q \in Preds \setminus base\_clause$}{
        	$candidate\_clause \gets Q \vee (base\_clause \wedge q)$ \;
            $S_Q \gets structuring(E, {\cal V}, candidate\_clause)$ \;
            $score \gets extrinsic\_measure(S^*, S_Q)$ \;
            
            \If{$score > best\_score$}{
            	$best\_score \gets score$ \;
                $best\_clause \gets candidate\_clause$ \;
            }
        }
        
        $best\_specialized\_clause \gets best\_clause(E,S^*,{\cal V}, Preds, Q, best\_clause)$ \;
        
        \If{$best\_specialized\_clause \neq \emptyset$}{
        	$S_Q \gets structuring(E, {\cal V}, best\_specialized\_clause)$ \;
            $specialized\_clause\_score \gets extrinsic\_measure(S^*, S_Q)$ \;
            
            \If{$best\_score < specialized\_clause\_score$}{
            	$best\_clause \gets best\_specialized\_clause$ \;
            }
        }
        
    	\Return{$best\_clause$} \;
    }
    
    \caption{Building the "best" clause}
    \label{algo:findclause}
\end{algorithm}

\begin{algorithm}
	\DontPrintSemicolon
    \SetKwProg{Fn}{Function}{}{end}
    
    \Fn{GreedyLPS(E, $S^*$, $\cal V$, max\_iter)}{
    	$iter \gets 0$ \;
        $score \gets 0$ \;
        $is\_terminated \gets false$ \;
        $Preds \gets $ predicates derived from $\cal V$ \;
        $\tilde{Q} \gets \emptyset$ \;
        
        \While{$iter < max\_iter$ and $\neg is\_terminated$}{
        	$clause \gets best\_clause(E, S^*, {\cal V}, Preds, \tilde{Q}, \emptyset)$ \;
            $S_Q \gets structuring(E, {\cal V}, \tilde{Q} \vee clause)$ \; 

            \If{($clause = \emptyset$ or $extrinsic\_measure(S^*, S_Q) \le score$}{
            	$is\_terminated \gets true$ \;
            }
            \Else{
                $score \gets extrinsic\_measure(S^*, S_Q)$ \;
            	$\tilde{Q} \gets \tilde{Q} \vee clause$ \;
			}
        }
        
        \Return{$\tilde{Q}$}
    }
    
    \caption{Greedy LPS}
    \label{algo:greedy}
\end{algorithm}

The algorithm starts with an empty DNF $Q$ which is built in an iterative fashion.
At each iteration, the algorithm performs a beam search to find the clause $c$
maximizing the extrinsic measure defined above. Then $c$ is appended to $Q$ and the next
iteration begins.
The algorithm terminates either when the maximum number of iterations is reached or when $Q$ cannot be optimized, that is to say no additional clause improves the extrinsic measure obtained by $a_Q(.)$.
The pseudo-code of a simplified version (with a beam search of size 1) of Greedy LPS is presented in Algorithm \ref{algo:greedy}. It exposes a complexity of $O(max\_iter \cdot |{\cal V}| \cdot beam\_size)$ which is in practice much less than the one exposed by Genetic LPS since $|{\cal V}| \cdot beam\_size$ is usually substantially below the size of the initial population of Genetic LPS.

Greedy LPS is much simpler than the Genetic LPS framework
and it results in much shorter execution times. In addition, further experimental comparisons (cf. section \ref{sec:expe}) are going to reveal that - despite the lack 
of completeness in the exploration of the solution space and thanks to the concision of the outputted pseudo-closures - the greedy learning strategy outperforms significantly the stochastic approaches in the LPS task. 

\medskip

The postulate in the following contribution is that the extrinsic measure used in the previous learning methodologies is not adapted for incremental learning strategies. The greedy LPS approach should retrieve even better models by using an objective measure considering not only the quality of the structuring (resulting elementary closed sets) but also the potential of the pseudo-closure (elementary and non-elementary closed sets).

As an example, suppose we try to learn the pretopological space showed in figure \ref{fig:closurecomparison}, and suppose two closure operators $F_Q(.)$ and $F_{Q'}(.)$ defined from two DNF $Q$ and $Q'$.
Both $Q$ an $Q'$ have the same (extrinsic) F-measure score, as a consequence the Greedy LPS method is unable to determine which DNF is the best. Choosing the wrong clause in the iteration $i$ could have a huge impact in the clause chosen in iteration $i+1$ (and further). Therefore it is fundamental to be able to determine which of $Q$ or $Q'$ is the most useful.

Recall that we only consider V-type pretopological spaces, so $\forall A, \forall B, A \subseteq B \Rightarrow a(A) \subseteq a(B)$. Thus, the fact that $F_Q(\{a\})=\{a, d\}$ enlighten us on the closures of the supersets of $\{a\}$ : any set $A \in {\cal P}(E)$ such that $a \in A$ will propagate to at least $d$. Of course, the same apply with $Q'$: $\{c\}$ and its supersets are expanded to $d$. Although the informations provided by $Q$ and $Q'$ are quantitatively equivalent, $Q$ gives more useful informations. Indeed, $Q'$ tells us that, because $d \in F_{Q'}(\{c\})$, then $d \in F_{Q'}(\{b, c\})$. But, since we try to learn a model able to retrieve the elementary closed sets provided as input, this information is useless: we know that $F^*(\{c\}) = \{c, d\}$, then we must learn a formula such that $\{c\}$ propagates to $d$ (and only $d$), otherwise the learned elementary closed set would be wrong. So knowing how $\{b, c\}$ will propagate is of no interest for retrieving the target elementary closed sets. On the contrary, $Q$ tells us that each superset of $\{a\}$ expands to $d$, since $F^*(\{a\})=E$, all of these informations are precious to learn the target pretopological space. Thus $Q$ gives more qualitative informations than $Q'$.

We need a different quality measure taking into account these \emph{intrinsics informations} because such a method would be able to determine that, while $Q$ an $Q'$ are equivalent in terms of F-measure (aka the extrinsic quality measure), $Q$ has more potential with regards to the subsequent iterations. We talk about "potential" because although $Q$ informs us that $d \in F_Q(\{a, c\})$, this information is not reflected by the resulting elementary closed sets. However, it can, and will, be important when learning the next clauses since it will influence the algorithm to learn a clause that completes the elementary closure of $\{a\}$. On the contrary, the extrinsic quality measure cannot influence the results of the following iterations in such a way because it considers any expansion to be as important, which is wrong as we just show.

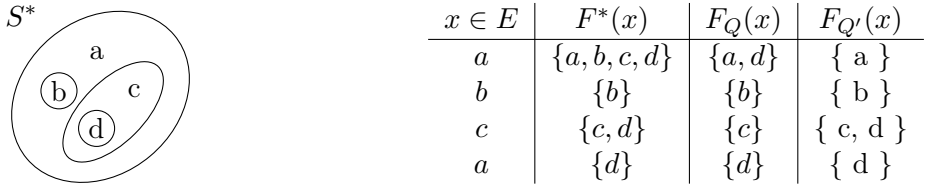
\begin{figure}
    \begin{minipage}{0.33\textwidth}
        \centering
        \begin{tikzpicture}[graph_name/.style={}, graph_node/.style={circle, draw, outer sep=1mm}, >=stealth]
            \coordinate (xshift) at (4, 0);
            \coordinate (yshift) at (0, -4);

            \node (ref1) at (0, 2) {$S^*$};
            \node (refa1) at (1, 1.5) {a};
            \node[circle, draw, inner sep=0.5mm, outer sep=1mm] (refb1) at (0.5, 1) {b};
            \node (refc1) at (1.5, 1) {c};
            \node[circle, draw, inner sep=0.5mm, outer sep=1mm] (refd1) at (1, 0.5) {d};
            
            \node[ellipse, inner sep=0.5mm, draw, rotate fit=45, fit={(refc1) (refd1)}] (Fc1) {};
            \node[ellipse, inner sep=0.5mm, draw, rotate fit=40, fit={(refa1) (refb1) (Fc1)}] {};
        \end{tikzpicture}
    \end{minipage}
    \begin{minipage}{0.66\textwidth}
        \centering
        \begin{tabular}{c|c|c|c}
            $x \in E$ & $F^*(x)$ & $F_Q(x)$ & $F_{Q'}(x)$ \\
            \hline

            $a$ & $\{ a,b,c,d\}$ & $\{ a, d\}$ & \{ a \} \\
            $b$ & $\{ b\}$ & $\{ b \}$ & \{ b \} \\
            $c$ & $\{ c, d\}$ & $\{ c \}$ & \{ c, d \} \\
            $a$ & $\{ d\}$ & $\{ d \}$ & \{ d \}
        \end{tabular}
    \end{minipage}
    
    \caption{A target set of elementary closed sets $S^*$ and two candidate closure operators}
    \label{fig:closurecomparison}
\end{figure}

Moreover, pretopology is made in such a way that a single collection
of closed sets can be obtained by many different pseudo-closure operators.
For example, there are three ways to obtains a closure $F(\{b\}) = \{ b, c, d \}$
\begin{enumerate}
 	\item $a(\{b\}) = \{ b, c, d \}$
    \item $a(\{b\}) = \{ b, c \}$ and $a(\{ b, c \}) = \{ b, c, d \}$
    \item $a(\{b\}) = \{ b, d \}$ and $a(\{ b, d \}) = \{ b, c, d \}$    
 \end{enumerate}
 It means that when we learn a pretopological space based on its elementary closed sets, we must
 consider the set of all the pseudo-closure operator leading to the same elementary closed sets.
 
 So, given a closure function $F^*(.)$, we learn an underlying 
 pseudo-closure operator, which appears not to be unique. 
 Reworded, it means that we learn a target function which can take
 multiple \emph{shapes}, each described by a different set of features (i.e. different combinations of neighborhoods). This is the formulation of the multiple instance problem 
 \citep{DBLP:journals/ai/DietterichLL97} on which the remaining of the contribution is based. 
    \section{A multiple instance approach}\label{sec:mi}

The multiple instance (MI) problem \citep{DBLP:journals/ai/DietterichLL97} arises when an observation can be described by multiple feature vectors. The set of vectors describing an observation is called a \emph{bag} of instances and is labeled positively or negatively. A bag is labeled according to the chosen MI assumption \citep{DBLP:journals/ker/FouldsF10}, but our work stays within the \emph{standard MI assumption} where a bag is positive if and only if at least one of its instances is positive. On the contrary, a bag is labeled negative when all of its instances are negative.

The MI task consists in finding a function predicting the label of a new instance with the sole knowledge of the bags labels. A famous example of MI task is the \emph{simple jailer problem} \citep{DBLP:conf/ai/ChevaleyreZ01}: given a locked door and a bunch of key rings, the task is to find which kind of key opens it. But the only information we have is if a key ring is useful or not, that is to say if it contains at least one key capable of unlocking the door. An example of a simple jailer dataset is provided in Table \ref{tab:multidata}. Recall that in real world cases the column "Instance labels" is unknown and is represented here in order to understand why Bags 1 and 2 are positives (because they have at least one positive instance). On the contrary, Bag 3 is labeled negatively because it has no positive instances.

\begin{table}
	\centering
    \begin{tabular}{c|c|c|c|c}
       	Bag & Shape & Size & Instance labels & Bag labels \\
        \hline

        \multirow{3}{*}{1} & Squared & Big & 1 & \multirow{3}{*}{1} \\
        & Squared & Small & 0 & \\
        & Squared & Medium & 1 & \\
        \hline

        \multirow{2}{*}{2} & Squared & Medium & 1 & \multirow{2}{*}{1} \\
        & Triangular & Small & 0 & \\
        \hline

        \multirow{3}{*}{3} & Rectangular & Big & 0 & \multirow{3}{*}{0} \\
        & Squared & Small & 0 & \\
        & Triangular & Small & 0 & \\
    \end{tabular}
    
    \caption{Example of a multiple instance dataset on the simple jailer problem}
    \label{tab:multidata}
\end{table}

We model the task of learning a pretopological space based on its expected elementary closed sets $S^*$ by a MI learning task. An instance represents a propagation from a set $A$ to an element $x$ and is labeled positively if $x \in F^*(A)$ according to $S^*$, otherwise the instance is labeled negatively. As we briefly saw previously, multiple propagations for a set $A$ are compliant with $S^*$. Thus we model a bag by the set of instances modeling those propagations.

\subsection{Building a MI dataset for the LPS task}

Given a finite non empty set of elements $E$, we learn a propagation concept which takes the form of a boolean function $Q : {\cal P}(E) \times E \rightarrow \{0, 1\}$. A pseudo-closure operator $a_Q(.)$ is then derived from $Q$ and defined as $\forall A \in {\cal P}(E),\, a_Q(A) = \{ x \in E \mid Q(A, x) = 1 \}$. Finally, $a_Q(.)$ must be designed such that the resulting elementary closed sets $S_Q$ best fits with a target set of elementary closed sets $S^*$.

To solve this problem we first define a method to build a MI dataset from a set of elementary closed sets $S^*$ and a collection of $k$ neighborhoods ${\cal V} = \{ V_1, \ldots, V_k \}$. We describe an observation (a bag) as a set of propagations that must occur or not, depending if we describe a positive or negative bag. If the bag is positive, at least one propagation must occur to be able to retrieve the target pretopological space, and vice versa. A propagation (an instance) is identified by a couple $(A,x) \in {\cal P}(E) \times E$ and is described by the boolean feature vector $(q_1(A, x), \ldots, q_k(A, x))$ with $q_i$ the predicate derived from $V_i \in {\cal V}$ such that $q_i(A, x) \equiv V_i(x) \cap A \neq \emptyset$.

In order to compute the whole set of positive bags required to solve the LPS task, all the sets $A \in {\cal P}(E)$ such that $A$ is expanded by the target pseudo-closure operator must be considered. Since the only information provided to the LPS task is the set $S^*$  of elementary closed sets, we will consider only the sets $A \in {\cal P}(E)$ such that $A$ is reachable (through the pseudo-closure operator) from any $x \in E$. This is possible thanks to the isotonic property of the V-type pretopological spaces.

For all $x \in E$, its elementary closure $F^*(\{ x \})$ is provided to the LPS task. The set of sets reachable from $x$ is deduced from this knowledge : the V-type property guaranties us that any set $A$ such that $\{ x \} \subseteq A$ is expanded to, a least, all the elements in $F^*(\{ x \})$ ($F^*(\{ x \}) \subseteq F^*(A)$). However, any set $A$ such that $A \subseteq F^*(\{ x \}) \neq \emptyset$ can be safely ignored since such a set cannot be reached from $x$.
As a consequence, only the sets $A$ such that $\{ x \} \subseteq A \subset F^*(\{ x \})$ should be considered to build a positive bag engendered by $x$. The set $F^*(\{ x \})$ is excluded because, by definition of a closed set, it cannot be expanded by the pseudo-closure operator. For any set $A$ fulfilling these criteria, the V-type property tells that $A$ must be expanded, through the closure operator $F^*(.)$, to exactly all the elements in $F^*(\{ x \})$. That is to say, $A$ must be expanded, through the pseudo-closure operator, to a least one element in $F^*(\{ x \}) \setminus A$.
Hence, for all sets $A$ such that $\{ x \} \subseteq A \subset F^*(\{ x \})$, $x$ engenders the positive bag $bag^+(x, A)$ which models the propagation from $A$ to an element in $F^*(\{ x \}) \setminus A$. The whole set of positive bags engendered by $x$ is noted $bags^+(x)$.

\begin{align}
	\forall x \in E,\, \forall \{ x \} \subseteq A \subset F^*(\{ x \}),~ &bag^+(x, A) = \{ (A, y) \}_{y \in F^*(\{ x \}) \setminus A} \\
    \forall x \in E,~ &bags^+(x) = \{ bag^+(x, A) \}_{\{ x \} \subseteq A \subset F^*(\{ x \})}
\end{align}

It should be mentioned that, in some cases where two (or more) elements $x,y \in E$ share the same elementary closure $F^*_{xy} = F^*(\{ x \}) = F^*(\{ y \})$, some bags are engendered by both $x$ and $y$. Actually, any positive bag $bag^+(x, A)$ where $\{ x, y \} \subseteq A \subset F^*_{xy}$ is engendered by both $x$ and $y$. Such a positive bag will be designated as either $bag^+(x, A)$, $bag^+(y, A)$ or $bag^+(\{ x, y \}, A)$.
This is not a trivial property since it cause some trouble evaluating a potential solution.

The computation of the whole set of negative bags required to solve the LPS task is a bit different. For all $x \in E$, the set of elements not reachable by $x$ must be computed. It is an easy task since is corresponds to the elements in $E \setminus F^*(\{ x \})$. For each $y \in E \setminus F^*(\{ x \})$, the sets that are not expanded to $y$ are determined: for all $A\in {\cal P}(E)$, if $\{ x \} \subseteq A \subseteq F^*(\{ x \})$ then $A$ is not expanded to $y$ ($y \notin F^*(A)$). Notice that this time $F^*(\{ x \})$ is not ignored because it should \emph{not} be expanded to any element.  If there is an $A$ such that $y \in F^*(A)$ then it means $\{ y \} \subseteq F^*(\{ x \}) \subseteq F^*(A)$. This is impossible since $S^*$ tells us that $y \notin F^*(\{ x \})$. Hence, for all elements $y \in E \setminus F^*(\{ x \})$, $x$ engenders a negative bag $bag^-(x, y)$ which models the propagations forbidden by $S^*$. The whole set of negative bags engendered by $x$ is noted $bags^-(x)$. 

\begin{align}
	\forall x \in E,\, \forall y \in E \setminus F^*(\{ x \}),~  &bag^-(x, y) = \{ (A, y) \}_{\{ x \} \subseteq A \subseteq F^*(\{x\})} \\
    \forall x \in E,~ &bags^-(x) = \{ bag^-(x, y) \}_{y \in E \setminus F^*(\{ x \})}
\end{align}

It is not possible for a negative bag to be shared by multiple elements. Indeed, a negative bag engendered by $x \in E$ contains an instance $(\{ x \}, y)$ describing the expansion of the singleton $\{ x \}$ to an element $y \notin F^*(\{ x \})$. It is therefore impossible for the instance $(\{ x \}, y)$ to belong to a negative bag engendered by  $z \neq x$.

For each $x \in E$, a positive bag $bag^+(x, A)$ is engendered (by $x$) for all $A$ where $A$ is framed between a bottom element $\bot \in {\cal P}(E)$ and a top element $\top \in {\cal P}(E)$: $\bot \subseteq A \subset \top$. This property allows for a good visual representation of the positive bags engendered by $x$. Let's consider the lattice $\cal L$ of the elements in $E$, the set of positive bags $bags^+(x)$ engendered by $x \in E$ is visually represented by a subset of $\cal L$ whose bottom element is $\{ x \}$ and whose top element is $F^*(\{ x \})$ (excluded). We define two sub-lattice notations, inspired by the notation of intervals, defined for all $A \subseteq B \subseteq E$.

\begin{align}
	\sublat{A}{B} &= {\cal F}_A \cap {\cal P}(B) \\
    \sublatx{B}{B} &= \sublat{A}{B} \setminus B
\end{align}

$\sublat{A}{B}$ is defined as the intersection between the supersets of $A$ ($= {\cal F}_A$) and the subsets of $B$ ($= {\cal P}(B)$), that is to says the sets between $A$ and $B$ in the lattice $\cal L$. $\sublatx{A}{B}$ is equal to $\sublat{A}{B}$ depleted of its biggest element.

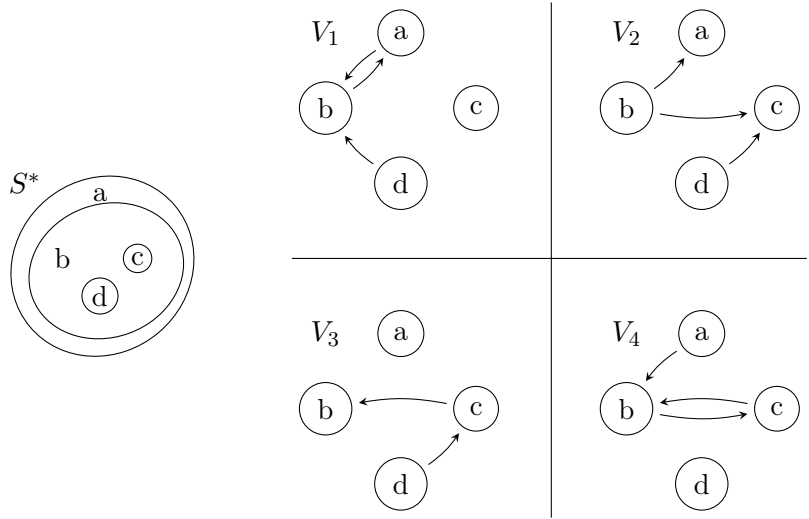
\begin{figure}
    \centering

    \begin{tikzpicture}[graph_name/.style={}, graph_node/.style={circle, draw, outer sep=1mm}, >=stealth]
        \coordinate (xshift) at (4, 0);
        \coordinate (yshift) at (0, -4);

        \node (ref) at (0, 2) {$S^*$};
        \node (refa) at (1, 1.85) {a};
        \node (refb) at (0.5, 1) {b};
        \node[circle, draw, inner sep=0.5mm, outer sep=1mm] (refc) at (1.5, 1) {c};
        \node[circle, draw, inner sep=0.5mm, outer sep=1mm] (refd) at (1, 0.5) {d};

        \node[ellipse, inner sep=0.2mm, draw, rotate fit=20, fit={(refb) (refc) (refd)}] (Fb) {};
		\node[ellipse, inner sep=0mm, draw, rotate fit=40, inner sep=-1mm, fit={(Fb) (refa)}] {};

        \node[graph_name] (V1) at ($(ref) + (xshift) + (0, 2)$) {$V_1$};
        \node[graph_node] (a1) at ($(1, 4) + (xshift)$) {a};
        \node[graph_node] (b1) at ($(0, 3) + (xshift)$) {b};
        \node[graph_node] (c1) at ($(2, 3) + (xshift)$) {c};
        \node[graph_node] (d1) at ($(1, 2) + (xshift)$) {d};
        \path[->] 
            (a1) edge[bend right=10] (b1) 
            (b1) edge[bend right=10] (a1) 
            (d1) edge[bend left=10] (b1);

        \node[graph_name] (V2) at ($(V1) + (xshift)$) {$V_2$};
        \node[graph_node] (a2) at ($(a1) + (xshift)$) {a};
        \node[graph_node] (b2) at ($(b1) + (xshift)$) {b};
        \node[graph_node] (c2) at ($(c1) + (xshift)$) {c};
        \node[graph_node] (d2) at ($(d1) + (xshift)$) {d};
        \draw[->] 
            (b2) edge[bend right=10] (a2) 
            (b2) edge[bend right=10] (c2)
            (d2) edge[bend right=10] (c2);


        \node[graph_name] (V3) at ($(V1) + (yshift)$) {$V_3$};
        \node[graph_node] (a3) at ($(a1) + (yshift)$) {a};
        \node[graph_node] (b3) at ($(b1) + (yshift)$) {b};
        \node[graph_node] (c3) at ($(c1) + (yshift)$) {c};
        \node[graph_node] (d3) at ($(d1) + (yshift)$) {d};
        \path[->] 
            (d3) edge[bend right=10] (c3)
            (c3) edge[bend right=10] (b3);

        \node[graph_name] (V4) at ($(V1) + (xshift) + (yshift)$) {$V_4$};
        \node[graph_node] (a4) at ($(a1) + (xshift) + (yshift)$) {a};
        \node[graph_node] (b4) at ($(b1) + (xshift) + (yshift)$) {b};
        \node[graph_node] (c4) at ($(c1) + (xshift) + (yshift)$) {c};
        \node[graph_node] (d4) at ($(d1) + (xshift) + (yshift)$) {d};
        \path[->] 
            (a4) edge[bend right=10] (b4) 
            (b4) edge[bend right=10] (c4)
            (c4) edge[bend right=10] (b4);

        \draw (a1.north) ++ (2, 0) -- ($(d3.south) + (2, 0)$);
        \draw (b1.west) ++ (0, -2) -- ($(c2.east) - (0, 2)$);
    \end{tikzpicture}

    \caption{A set of elementary closures $S^*$ and four neighborhoods}
    \label{fig:buildingdataset}
\end{figure}

To illustrate this bags generation process, we consider the target set of elementary closed sets $S^*$ and the collection of four neighborhoods showed in Figure \ref{fig:buildingdataset}. Let's consider the element $b$: according to $S^*$, we know that $F^*(\{b\}) = \{ b, c, d \}$. We deduce from this statement that there are three valid V-type pseudo-closure operators, corresponding to the three following behaviors : 
\begin{enumerate}
	\item $a_{Q_1}(\{b\}) = \{b, c, d\}$ and $a_{Q_1}(\{b, c, d\}) = \{b, c, d\}$
	\item $a_{Q_2}(\{b\}) = \{b, c\}$ then $a_{Q_2}(\{b, c\}) = \{b, c, d\}$ and $a_{Q_2}(\{b, c, d\}) = \{b, c, d\}$
	\item $a_{Q_3}(\{b\}) = \{b, d\}$ then $a_{Q_3}(\{b, d\}) = \{b, c, d\}$ and $a_{Q_3}(\{b, c, d\}) = \{b, c, d\}$
\end{enumerate}
Thus, we know the general behavior the learned pseudo-closure operator must respect with the sets in $\sublat{\{ b \}}{F^*(\{ b \})}$.
\begin{enumerate}
	\item $\{b\}$ must propagate to at least $c$ or $d$
	\item $\{b, c\}$ must propagate to $d$, since $d \in F^*(\{b\})$ and $\{b\} \subseteq \{b, c\}$ (V-type property)
	\item $\{b, d\}$ must propagate to $c$, since $c \in F^*(\{b\})$ and $\{b\} \subseteq \{b, d\}$ (V-type property)
	\item $\{b\}$, $\{b, c\}$, $\{b, d\}$ and $\{b, c, d\}$ must not propagate to $a$
\end{enumerate}

The bags are built according to these four behaviors. To model a "must propagate" behavior, a positive bag is built, and to model a "must not propagate" behavior, a negative bag is built. We know that any set $A \in {\cal P}(E)$ such that $b \in A$ must propagate to $c$ or $d$ (or both). For example, the set $\{a, b \}$ must propagate to $c$ or $d$. But $S^*$ informs us that the element $a$ must not appear in the closure of $\{b\}$. Since we consider only the elementary closed sets, and since $\{a, b\}$ is not reachable from $\{ b \}$, no information about the expansion of $\{ a, b \}$ is useful to learn the closure of $\{ b \}$. As a consequence, only the supersets of $\{b\}$ strictly included in $F^*(\{b\})$ are considered,  that is to says the sub-lattice $\sublatx{\{b\}}{F^*(\{b\})} = \{ \{ b \}, \{ b, c \},\{ b, d \} \}$. Figure \ref{fig:lattice} shows the lattice $\cal L$ and highlights the sets describing the positive bags engendered by $b$ ($bags^+(b)$).

Let's build the positive bags engendered by $b$. The first bag, $bag^+(b, \{b\})$, contains two instances: $(\{b\},c)$ and $(\{b\},d)$. Two other positive bags are built: $bag^+(b, \{ b, c\})$ and $bag^+(b, \{ b, d \})$ contain both one instance, respectively, $(\{b, c\},d)$ and $(\{b, d\},c)$.

Building the negative bags is a little bit different. We know that each set belonging to $\sublat{\{b\}}{F^*(\{b\})} = \{ \{b\}, \{b, c\}, \{b, d\}, \{b, c, d\} \}$ must not propagate to $a$. So a single negative bag is built: $bag^-(b, a)$ contains the four negative instances $(\{b\},a)$, $(\{b, c\},a)$, $(\{b, d\},a)$ and $(\{b, c, d\},a)$.

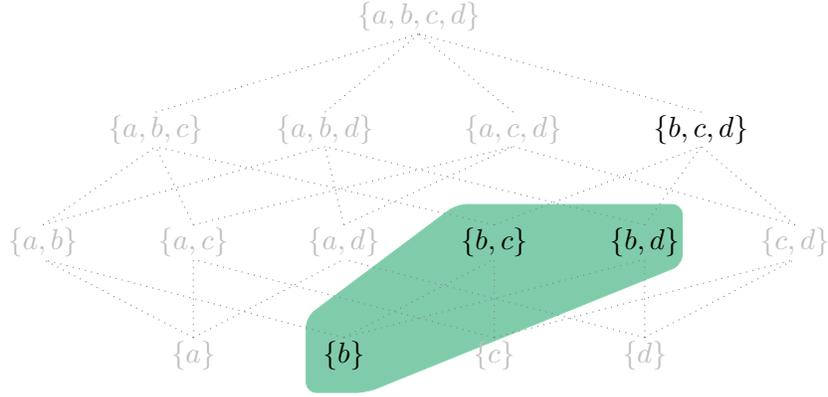
\begin{figure}
	\centering
    \begin{tikzpicture}[
        every node/.style={draw=none, inner sep=0.3mm, transform shape},
        edge/.style={thin, color=gray, dotted},
        shaded/.style={thin, color=lightgray},
        covered/.style={draw, shape=rectangle},
        rejected/.style={draw, shape=strike out},
        >=stealth,
        scale=1
    ]
        \node[shaded] (a) at (0, 0) {$\{ a \}$};
        \node (b) at (2, 0) {$\{ b \}$};
        \node[shaded] (c) at (4, 0) {$\{ c \}$};
        \node[shaded] (d) at (6, 0) {$\{ d \}$};

        \node[shaded] (ab) at (-2, 1.5) {$\{ a, b \}$};
        \node[shaded] (ac) at (0, 1.5) {$\{ a, c \}$};
        \node[shaded] (ad) at (2, 1.5) {$\{ a, d \}$};
        \node (bc) at (4, 1.5) {$\{ b, c \}$};
        \node (bd) at (6, 1.5) {$\{ b, d \}$};
        \node[shaded] (cd) at (8, 1.5) {$\{ c, d \}$};

        \node[shaded] (abc) at (-0.5, 3) {$\{ a, b, c \}$};
        \node[shaded] (abd) at (1.75, 3) {$\{ a, b, d \}$};
        \node[shaded] (acd) at (4.25, 3) {$\{ a, c, d \}$};
        \node (bcd) at (6.75, 3) {$\{ b, c, d \}$};
        
        \node[shaded] (abcd) at (3, 4.5) {$\{ a, b, c, d \}$};

        \draw[edge] (a.north) edge (ab.south) edge (ac.south) edge (ad.south);
        \draw[edge] (b.north) edge (ab.south) edge (bc.south) edge (bd.south);
        \draw[edge] (c.north) edge (ac.south) edge (bc.south) edge (cd.south);
        \draw[edge] (d.north) edge (ad.south) edge (bd.south) edge (cd.south);

        \draw[edge] (ab.north) edge (abc.south) edge (abd.south);
        \draw[edge] (ac.north) edge (abc.south) edge (acd.south);
        \draw[edge] (ad.north) edge (abd.south) edge (acd.south);
        \draw[edge] (bc.north) edge (abc.south) edge (bcd.south);
        \draw[edge] (bd.north) edge (abd.south) edge (bcd.south);
        \draw[edge] (cd.north) edge (acd.south) edge (bcd.south);
        
        \draw[edge] (abc.north) edge (abcd.south);
        \draw[edge] (abd.north) edge (abcd.south);
        \draw[edge] (acd.north) edge (abcd.south);
        \draw[edge] (bcd.north) edge (abcd.south);

         
         \begin{pgfonlayer}{background}
             \coordinate (pos0) at ($(b) + (-0.5, -0.5)$) ;
             \coordinate (pos1) at ($(b) + (-0.5, 0.5)$) ;
             \coordinate (pos2) at ($(bc) + (-0.5, 0.5)$) ;
             \coordinate (pos3) at ($(bd) + (0.5, 0.5)$) ;
             \coordinate (pos4) at ($(bd) + (0.5, -0.3)$) ;
             \coordinate (pos5) at ($(b) + (0.3, -0.5)$) ;
             \draw[Green, fill=ForestGreen, opacity=0.5, rounded corners] (pos0) -- (pos1) -- (pos2) -- (pos3) -- (pos4) -- (pos5) -- cycle ;
         \end{pgfonlayer}
    \end{tikzpicture}
    
    \caption{The lattice $\cal L$ of the elements in $E = \{ a, b, c, d \}$.
    The sets in $\sublat{\{ b \}}{F^*(\{b\})}$ are displayed in black and are useful to learn the propagation of $\{ b \}$, either because they must or must not be expanded by the learned pseudo-closure operator.
    Each set $A$ in the green sub-lattice ${\cal L} \left[ \{b\}, F^* \left( \{b\} \right) \right[$ gives rise to a positive bag engendered by $b$: $b^+(b, A)$.}
    \label{fig:lattice}
\end{figure}

\begin{table}
    \centering
    \resizebox{\linewidth}{!}{%
    \begin{tabular}{c|c|cccc|c|c}
        Id & $(A,x) \in {\cal P}(E) \times E$ & $q_1(A, x)$ & $q_2(A,x)$ & $q_3(A,x)$ & $q_4(A,x)$ & $x \in a(A)$ & $x \in F^*(A)$ \\ \hline
        \multirow{2}{*}{$bag^+(b, \{b\})$} & $(\{b\},c)$                      & 0           & 0          & 1          & 1          & ?            & \multirow{2}{*}{1}            \\
        & $(\{b\},d)$                      & 1           & 0          & 0          & 0          & ?            &            \\ \hdashline
        $bag^+(b, \{b, c\})$ & $(\{b, c\},d)$                   & 1           & 1          & 1          & 0          & 1            & 1            \\ \hdashline
        $bag^+(b, \{b, d\})$ & $(\{b, d\},c)$                   & 0           & 0          & 1          & 1          & 1            & 1            \\ \hdashline
        \multirow{4}{*}{$bag^-(b, a)$} & $(\{b\},a)$                      & 1           & 0          & 0          & 1          & 0            & \multirow{4}{*}{0}            \\
        & $(\{b, c\},a)$                   & 1           & 0          & 0          & 1          & 0            &           \\
        & $(\{b, d\},a)$                   & 1           & 0          & 0          & 1          & 0            &             \\
        & $(\{b, c, d\},a)$                & 1           & 0          & 0          & 1          & 0            &            
    \end{tabular}%
    }
    
    \caption{Bags engendered by $b$}
    \label{tab:bagsengendered}
\end{table}

Table \ref{tab:bagsengendered} shows the four bags engendered by the element $b$. The values in the column "$x \in a(A)$" are the instance labels we try to learn. The interrogation marks in the two first lines show that we do not know, and actually do not care, whether $b$ must propagate to $c$, $d$ or both. Though, since they belong to a positive bag, we know that at least one instance is positive, that is to say $c \in a(\{b\}) \vee d \in a(\{b\})$. 
We are able to deduce how the other instances are labeled since they belong either to a positive bag compounded of a single instance or to a negative bag: they are labeled as positive in the first case and as negative in the second.

\subsection{The issue of the exponential MI dataset}

As exemplified above, the Multiple Instance framework offers an elegant and accurate modeling for the not trivial pretopology learning problem. However, since pretopology refers to the powerset ${\cal P}(E)$, the number of required positive bags engendered by an element $x \in E$ is exponential (proportional to the size of its elementary closure $F^*(\{x\})$) thus revealing a major issue for the learning process.

We just show how we build a dataset based on a set of elementary closed sets. With such a dataset, we could apply standard MI algorithms to learn a solution from it. A standard MI approach would require the enumeration of every instances in order to count the number of covered bags, this will not be a problem with the example given above. But as soon as we work on a real problem the number of generated positive bags is overwhelming. Indeed, we explain that $\forall x \in E$, we take all the sets belonging to the sub-lattice ${\cal L} \left[ \{x\}, F^*(\{x\}) \right[$ into account. Such a number of bags cannot be treated efficiently by a standard learning algorithm, since it is exponential in the size of $F^*(\{x\})$.

A standard greedy MI algorithm would collect, for each learning step, the set of covered  positive bags and remove them so that the following step would focus on the remaining bags only ; and so on until any positive bag is covered (or another termination criterion is reached). We remark that such an algorithm does not actually care to know whether a given positive bag is covered or not. It rather cares to know how many of them are covered, and if the termination criterion is reached.

Thus we propose a method to count (or estimate) the number of bags covered by a solution. This method relies on a trick relying itself on the properties of V-type pretopological spaces. Counting the number of covered negative bags is not a big deal so we will not cover it deeply. However, counting the number of covered positive bags is a problem requiring much more efforts. We show that while we are able to count the number of total positive bags, it is actually inefficient to count precisely the number of positive bags covered by a solution. Thus we estimate this number by subtracting a (high) estimation of
the positive bags rejected (i.e. not covered) by the solution to the number of total generated positive bags in our dataset.
%
%
%
%



\subsection{Estimate of true/false positives}

In the following a method is presented to estimate the number of positive/negative bags covered by a solution. We start by defining the total number of positive/negative bags in a dataset stemming from the LPS task. Then we detail the method for estimating the number of positive bags covered by a solution. Finally, the computation of the covered negative bags is briefly presented.

\subsubsection{Computing the total number of positive/negative bags}

The number of positive bags engendered by an element $x \in E$ (noted $b^+(x)$) matches with the number of elements in the sub-lattice $\sublatx{\{x\}}{F^*(\{x\})}$:
\begin{equation}
	b^+(x)=2^{|F^*(\{x\})| - 1} - 1
\end{equation}
But we cannot just sum the number of engendered positive bags for each $x \in E$ since, in some cases where multiple elements share the same elementary closure, a bag is engendered by multiple elements. For example, if $F^*(\{a\}) = F^*(\{b\}) = \{ a, b, c \}$, then 
the positive bag identified by $(a, \{ a, b \})$ is engendered by both elements $a$ and $b$ ; we remark that this bag is then identified by $(b, \{ a, b \})$ too.

Considering the set $\{F^*_1,\dots ,F^*_K\}$ of the $K$ distinct elementary closed sets from $E$ such that $\forall x \in E, \exists k \in \{1..K\}$ such that $F^*(\{x\})=F^*_k$ ; $E$ can be partitioned into a set of $K$ equivalence classes ${\cal A} = \{A_1,\dots A_K\}$ where each class $A_k$ is composed of the elements whose closure is $F^*_k$ ($\forall x \in A_k, F^*(\{x\})=F^*_k$). 

For each equivalence class $A_k \in {\cal A}$, the number of positive bags engendered by the whole subset $A_k$ is computed using the \emph{inclusion-exclusion} principle.
\begin{equation}
    \begin{split}
        \forall A_k \in {\cal A}, ~~~b^+(A_k) 
        &= \sum_{i=1}^{|A_k|} (-1)^{i+1} \sum_{X \in comb(A_k, i)} \left| \bigcap_{x \in X} \sublatx{\{x\}}{F^*_k} \right| \\
        &= \sum_{i=1}^{|A_k|} (-1)^{i+1} \binom{|A_k|}{i} (2^{|F^*_k|-i} - 1)
    \end{split}
\end{equation}
Where $comb(A_k, i)$ expresses the power set of $A_k$ reduced to its elements of size $i$. So we alternatively add and subtract the size of the intersection between any sub-lattice $\sublatx{X}{A_k}$ where $|X| = i$ and $X \subseteq A_k$. Since all considered sub-lattices share the same biggest element $F^*_k$, they also share the same size, which is $2^{|F^*_k| - i} - 1$ thus simplifying the expression of $b^+(A_k)$.




\begin{property}\label{prop:latticeinter}
	Let $(E, a)$ a V-type pretopological space, $A,B \in {\cal P}(E)$, the sub-lattice $\sublat{A}{F(A)}$ intersects $\sublat{B}{F(B)}$ if and only if $A$ and $B$ share the same closure.
    \begin{equation}
    	\forall A \in {\cal P}(E), \forall B \in {\cal P}(E),\; 
        \sublat{A}{F(A)} \cap \sublat{B}{F(B)} \neq \emptyset \Leftrightarrow F(A) = F(B)
    \end{equation}
\end{property}

\begin{proof}
	Consider a V-type pretopological space $(E, a)$ and two sets $A \in {\cal P}(E)$ and $B \in {\cal P}(E)$.
    
    \begin{itemize}
        \item 
        	If $F(A) = F(B) = K$, then $\sublat{A}{F(A)}$ and $\sublat{B}{F(B)}$ share at least their biggest element $K$. Hence $F(A) = F(B) \Rightarrow \sublat{A}{F(A)} \cap \sublat{B}{F(B)} \neq \emptyset$.

        \item 
        	If $\sublat{A}{F(A)} \cap \sublat{B}{F(B)} \neq \emptyset$, then $\exists C \in {\cal P}(E)$ such that $A \subseteq C$, $B \subseteq C$, $C \subseteq F(A)$ and $C \subseteq F(B)$. Then, by definition of a V-type pretopological space, $F(A) = F(C)$:
            \begin{align*}
                A \subseteq C \subseteq F(A) &\Rightarrow F(A) \subseteq F(C) \subseteq F(A) \\
                &\Rightarrow F(A) = F(C)
            \end{align*}
            We can show similarly that $F(B) = F(C)$. Hence, $\sublat{A}{F(A)} \cap \sublat{B}{F(B)} \neq \emptyset \Rightarrow F(A) = F(B)$.
    \end{itemize}
\end{proof}

Property \ref{prop:latticeinter} ensures us that the total number ($B^+$) of positive bags can be computed by simply summing the number of positive bags engendered for each equivalence class:
\begin{equation}
	B^+=\sum_{A_k \in {\cal A}} b^+(A_k)
\end{equation}

The number of negative bags is much more simpler to calculate since it does not require to rely on the inclusion-exclusion principle. The number of negative bags engendered by an element $x \in E$ (noted $b^-(x)$) matches the size of $\overline{F^*(\{x\})}$:

\begin{equation}
	b^-(x) = |E \setminus F^*(\{x\})|
\end{equation}
Hence the total number ($B^-$) of negative bags is obtained by summing the negative bags engendered by each element 
\begin{equation}
	B^-=\sum_{x \in E} b^-(x)
\end{equation}

We are now able to calculate the total number of positive and negative bags, without explicitly generating them. In order to know the number of positive bags covered by a solution $Q$ under construction, we need a way to determine how many positive bags are {\bfseries not} covered by $Q$. Then we will get the number of covered positive bags by subtracting the number of not covered positive bags to the total number of positive bags.
We show in the following a method to estimate the positive bags not covered by $Q$ and a method to calculate the number of negative bags covered by $Q$.

\subsubsection{Number of positive bags covered by a solution}

We estimate the number of positive bags covered (the true positives) by a solution $Q$ by subtracting the (estimated) number of not yet covered positive bags to the total number of positive bags. Due to the complexity of the pseudo-closure operator, we cannot count precisely the number of not covered positive bags. We propose a method to estimate this number.

We consider a set of target elementary closed sets $S^*$ and a solution $Q$ under construction. For any element $x \in E$, we consider the true part of its learned closure, noted $F_Q^*(\{x\}) = F_Q(\{x\}) \cap F^*(\{x\})$. The definition of the V-type pretopological space $(E,a_Q)$ guaranties us that $\forall A \in {\cal P}(E),\, x \in A \Rightarrow F_Q^*(\{x\}) \subseteq F_Q(A)$\footnote{since $F_Q^*(\{x\}) \subseteq F_Q(\{x\})$ and $F_Q(\{x\}) \subseteq F_Q(A)$ by isotony.}. It means that any set containing $x$ will be properly expanded by $(E,a_Q)$ to at least $F_Q^*(\{x\})$.
If we consider only the informations provided by $F_Q^*(\{x\})$, it also means that $F_Q^*(\{x\})$ is not properly expanded, so the positive bag $bag^+(x, F_Q^*(\{x\}))$ is not covered by $Q$. In the following, a (strong) assumption is made  that any positive bag $bag^+(x, A)$ where $F_Q^*(\{x\}) \subseteq A \subset F^*(\{x\})$\footnote{The sub-lattice $\sublatx{F_Q^*(\{x\})}{F^*(\{x\})}$} is not covered by $Q$ : it is called the \emph{elementary coverage assumption}.
%
%
%
%
%
%
%
%
%
%
%
%

\paragraph{The elementary coverage assumption}{
	For all $x \in E$, the set of positive bags whose coverage (by $Q$) is \emph{not} ensured by the learned closure of $x$ ($F_Q(\{ x \})$) is defined as:
    \begin{equation}
    	ec(x) = \sublatx{F_Q^*(\{ x \})}{F^*(\{ x \})}
    \end{equation}
    $ec(x)$ is the set of positive bags engendered by $x$ which are rejected by $Q$ if the sole knowledge provided by $F_Q^*(\{x\})$ is considered. It is a high estimation because it is possible that $\exists A,\, F_Q^*(\{x\}) \subset A \subset F^*(\{x\})$ such that $A$ is properly propagated by $a_Q(.)$. When such a case occurs, it means that $bag^+(x, A)$ is covered by $Q$ while it is not detected as such. A way to overcome this issue is to consider the supersets of $F_Q^*(\{x\})$, but it would require to compute the closure of every supersets, which would be terribly inefficient.
%
%
%
%
}


With that in mind, a (lower) estimation of the positive bags whose coverage (by $Q$) is ensured by the closure of $x \in E$ can be computed by subtracting the size of $ec(x)$ to the total number of positive bags engendered by $x$.
%
%
%
%
%
%

Figure \ref{fig:coveredposbagsQ} illustrates how theses covered positive bags are estimated. Let $E = \{ a, b, c, d\}$, $F^*(\{a\}) = \{a, b, c, d\}$ and $Q$ such that $F_Q(\{a\}) = \{ a, b \}$. We consider the whole set of positive bags engendered by the element $a$: $\sublatx{\{a\}}{F^*(\{a\})}$. The smallest set that is not properly expanded by $a_Q(.)$ is $F_Q^*(\{a\}) = \{ a, b \}$, so we suppose that every set above $\{ a, b \}$ (i.e. $ec(a)$) are not properly expanded too. This appears to be wrong since $F_Q(\{a, b, d\} = F^*(\{a\})$ so the bag $bag^+(a, \{ a, b, d \})$ is covered by $Q$ but ignored by our estimation.
The number of positive bags whose coverage is ensured by $F_Q(\{ x \})$ (the green area) is then calculated by subtracting the size of $ec(a)$ to the total number of positive bags engendered by $a$ : $\left| \sublatx{\{a\}}{F^*(\{a\})} \right| - |ec(a)| = 4$.
%
%

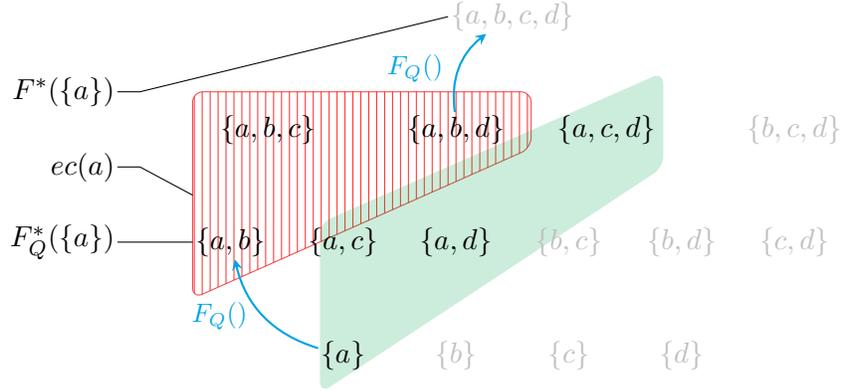
\begin{figure}
	\centering
    \begin{tikzpicture}[
        every node/.style={draw=none, inner sep=0.3mm},
        edge/.style={thin, color=gray, dotted},
        shaded/.style={thin, color=lightgray},
        scale=1
    ]
    
        \node (a) at (0, 0) {$\{ a \}$};
        \node[shaded] (b) at (1.5, 0) {$\{ b \}$};
        \node[shaded] (c) at (3, 0) {$\{ c \}$};
        \node[shaded] (d) at (4.5, 0) {$\{ d \}$};

        \node (ab) at (-1.5, 1.5) {$\{ a, b \}$};
        \node (ac) at (0, 1.5) {$\{ a, c \}$};
        \node (ad) at (1.5, 1.5) {$\{ a, d \}$};
        \node[shaded] (bc) at (3, 1.5) {$\{ b, c \}$};
        \node[shaded] (bd) at (4.5, 1.5) {$\{ b, d \}$};
        \node[shaded] (cd) at (6, 1.5) {$\{ c, d \}$};

        \node (abc) at (-1, 3) {$\{ a, b, c \}$};
        \node (abd) at (1.5, 3) {$\{ a, b, d \}$};
        \node (acd) at (3.5, 3) {$\{ a, c, d \}$};
        \node[shaded] (bcd) at (6, 3) {$\{ b, c, d \}$};
        
        \node[shaded] (abcd) at (2.25, 4.5) {$\{ a, b, c, d \}$};
        


		\draw[>=stealth, Cerulean, thick] (a) edge[->, bend left] node[midway, left, inner xsep=2mm] {\small$F_Q()$} (ab) ;
		\draw[>=stealth, Cerulean, thick] (abd) edge[->, bend left] node[midway, left, inner xsep=2mm] {\small$F_Q()$} (abcd) ;
        
        \begin{pgfonlayer}{background}         
            \coordinate (ec1) at (-2, 0.75) ;
            \coordinate (ec2) at (-2, 3.5) ;
            \coordinate (ec3) at (2.5, 3.5) ;
            \coordinate (ec4) at (2.5, 2.75) ;
            \coordinate (ec) at ($(ec1)!.5!(ec2)$) ;
            
            \draw[Red, fill=Red, pattern color=Red, opacity=0.8, pattern=vertical lines, rounded corners] (ec1) -- (ec2) -- (ec3) -- (ec4) -- cycle ;
            
             
            \coordinate (c1) at (-0.3, -0.5) ;
            \coordinate (c2) at (-0.3, 1.75) ;
            \coordinate (c3) at (4.25, 3.75) ;
            \coordinate (c4) at (4.25, 2.5) ;
            \coordinate (c) at ($(c1)!.5!(c2)$) ;
            
            \draw[Green, fill=Green, opacity=0.2, rounded corners] (c1) -- (c2) -- (c3) -- (c4) -- cycle ;
		\end{pgfonlayer}
        
        \coordinate (hline_len) at (0.3, 0) ;
        
        \node[anchor=east] (labelFQa) at (-3, 1.5) {$F_Q^*(\{ a \})$} ;
        \draw (labelFQa.east) -- +(hline_len) -- (ab.west) ;
        
        \node[anchor=east] (labelFa) at (-3, 3.5) {$F^*(\{ a \})$} ;
        \draw (labelFa.east) -- +(hline_len) -- (abcd.west) ;
        
        
        \node[anchor=east] (labelFQstara) at (-3, 2.5) 
        	{$ec(a)$} ;
        \draw (labelFQstara.east) -- +(hline_len) -- (ec) ;    
        
    \end{tikzpicture}
    
    \caption{Estimation of the positive bags engendered by $a$ that are covered by a solution $Q$. We suppose $F^*(\{a\}) = \{ a, b, c, d \}$ and $F_Q(\{a\}) = \{ a, b \}$. The positive bags whose coverage is ensured by $F_Q(\{ a \})$ are represented by the green area, which is $\sublatx{\{ a \}}{F^*(\{ a \}} \setminus ec(a)$.}
    \label{fig:coveredposbagsQ}
\end{figure}

For all $x \in E$, we are able to estimate (through a lower bound) the number of positive bags engendered by $x$ and covered by $Q$, by subtracting the number of \emph{not} covered bags to the number of bags engendered.
%
%
%
%
But, as for the calculation of the total number of positive bags, we need to be careful when multiple elements share the same elementary closed set. We use the inclusion-exclusion principle again to compute a fair estimation of the positive bags not covered by $Q$.
We define the function $r_Q^+ : \cup_{A_k \in {\cal A}}{\cal P}(A_k) \rightarrow \mathbb{N}$ which, given a subset $B$ of an equivalence class $A_k$, outputs an estimation of the number of positive bags engendered by the elements in $B$ whose coverage by $Q$ is not always guaranteed by all the closures of the elements in $B$.


\medskip

On a general purpose, we first define how to calculate the size of an union of $n$ sub-lattices ${\cal L}_1,\dots ,{\cal L}_n$ having the same top element noted $\top$ and different bottom elements $\bot_1, \dots, \bot_n $. 
\begin{equation}
	\begin{split}
		\left| \bigcup_{i=1}^n {\cal L}_i \right| 
        &= 
        	\sum_{i=1}^{n} (-1)^{i+1} 
            \sum_{B \in comb(\{1 \dots n\}, i)} \left|\bigcap_{j \in B} {\cal L}_j\right| \\
        &= 
        	\sum_{i=1}^{n} (-1)^{i+1} 
            \sum_{B \in comb(\{1 \dots n\}, i)} 2^{|\top| - |\cup_{j \in B}\bot_j|}
	\end{split}
\end{equation}
Where $comb(\{1 \dots n\}, i)$ expresses the power set of $\{1 \dots n\}$ reduced to its elements of size $i$. So the size of the union of sub-lattices is calculated by alternatively adding and subtracting the size of the intersection between the sub-lattices. The intersection between multiple sub-lattices sharing the same top element $\top$ is the sub-lattice whose biggest element is $\top$ (obviously) and whose bottom element is the union of the bottom elements of the considered sub-lattices. Then the size of this intersection is expressed by $2^{|\top| - |\cup_{j \in B}\bot_j|}$.

\medskip

Given any $i$-permutation $B$ of the equivalence class $A_k$, we use the elementary coverage assumption to estimate the number of positive bags whose coverage, by $Q$, is not always ensured by all the closures of the elements in $B$. That is to say, the number of positive bags that belongs to, at least, one element of $\{ ec(x) \}_{x \in B}$.
%
%
%
%
%
%
%
%
\begin{equation}
    \forall B \in comb(A_k, i),\, r_Q^+(B)
    = \left| \bigcup_{x \in B} \sublat{F_Q^*(\{x\}) \cup B}{F_k^*} \right| - 1
\end{equation}
We must subtract 1 in order to ignore the top element $F_k^*$, because $(\_, F_K^*)$ is not a positive bag.
%
%
%
%

Then, for any equivalence class $A_k \in {\cal A}$, the estimated number of positive bags engendered by (the elements of) $A_k$ and covered by $Q$ is calculated as follow:
\begin{equation}
	\begin{split}
        \forall A_k \in {\cal A},\, b_Q^+(A_k) &= 
            \sum_{i=1}^{|A_k|} (-1)^{i+1} 
            \sum_{B \in comb(A_k, i)} \left| \sublatx{B}{F_k^*} \right| - r_Q^+(B) \\
            &= \sum_{i=1}^{|A_k|} (-1)^{i+1} 
            \sum_{B \in comb(A_k, i)} 2^{|F^*_k| - i} - 1 - r_Q^+(B)
	\end{split}
\end{equation}
The total number of estimated positive bags covered by $Q$ is finally deduced, thanks to the property \ref{prop:latticeinter}, by summing $b_Q^+(A_k)$ for all $A_k \in {\cal A}$:
\begin{equation}
	B^+_Q = \sum_{A_k \in {\cal A}} b_Q^+(A_k)
\end{equation}

\noindent{\bf Example.} Let's consider a set $E = \{a, b, c, d\}$ such that $F^*(\{a\}) = F^*(\{c\}) = \{a, b, c, d\} = F^*_1$. We put both in the equivalence class $A_1 = \{ a, c\}$. Say we learned a DNF $Q$ such that $F_Q(\{a\}) = \{a, b\}$ and $F_Q(\{c\}) = \{c, d\}$.
Figure \ref{fig:equivposbags} shows, the set of positive bags engendered by $a$ whose coverage is ensured by $F_Q(\{ a \})$ (${\cal L}_a$), the set of positive bags engendered by $c$  whose coverage is ensured by $F_Q(\{ c \})$ (${\cal L}_c$) and the set of positive bags engendered by both $a$ and $c$ whose coverage is ensured by both $F_Q(\{ a \})$ and $F_Q(\{ c \})$ (${\cal L}_{ac}$).
%
%



The inclusion-exclusion principle is, again, uses to estimate the number of positive bags whose coverage is ensured by the closures of the elements of the equivalence class $A_1$. First, for all $x \in E$ the number of positive bags engendered by $x$ whose coverage is ensured by $F_Q(\{x\})$ ($b^+(x) - |ec(x)|$) are summed. As a consequence, the bag $bag^+(A_1, \{ a, c \})$ is counted two times. This mistake is suppressed by subtracting the number of positive bags whose coverage is ensured by both $F_Q(\{ a \})$ and $F_Q(\{ c \})$ \footnote{$\left| \sublatx{\{ a, c \}}{F_1^*} \right| - r_Q^+(\{ a, c \})$}. The calculation is hence:
\begin{eqnarray*}
	b_Q^+(A_1) &=&
    	\left( \left|\sublatx{\{a\}}{F_1^*} \right| - r_Q^+(\{a\}) \right) + 
    	\left( \left|\sublatx{\{c\}}{F_1^*} \right| - r_Q^+(\{a\}) \right) \\
    	&&- \left( \left|\sublatx{\{a, b, c\}}{F_1^*} \cap \sublatx{\{a, c, d\}}{F_1^*} \right| \right) \\
    &=& \left( 7 - 3 \right) + \left( 7 - 3 \right) - \left( 3 - 2 \right) \\
    &=& 7
\end{eqnarray*}

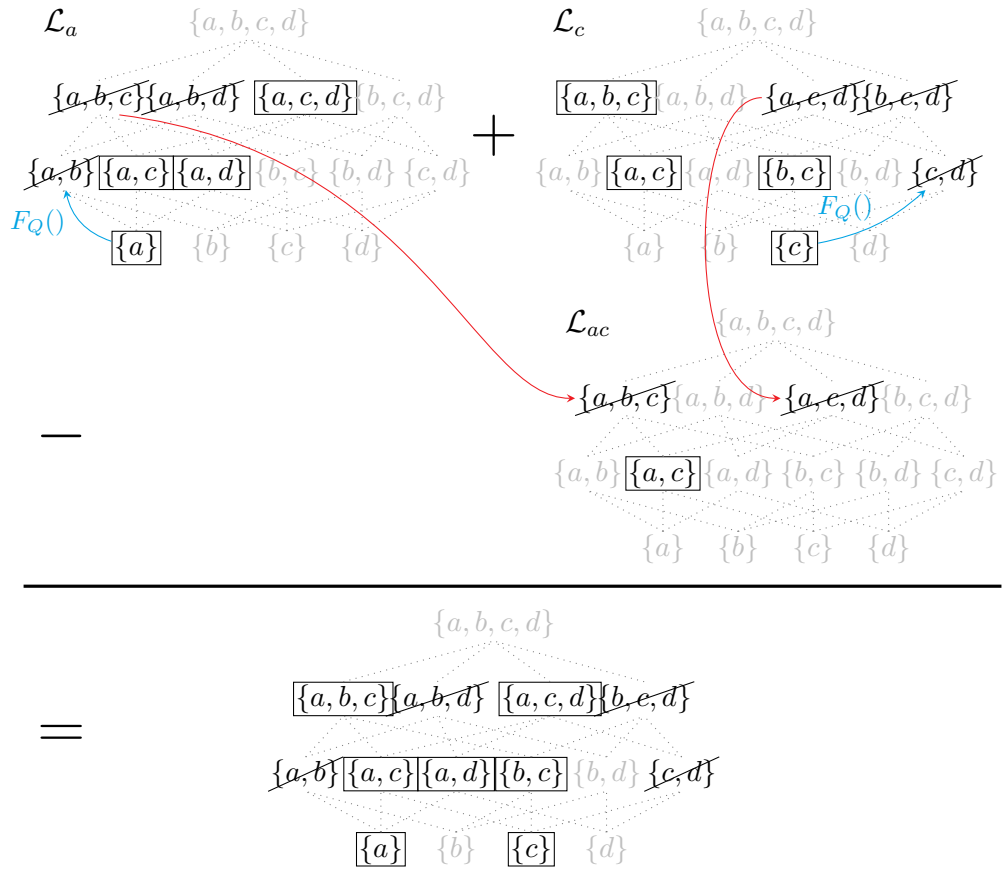
\begin{figure}
	\centering
    \begin{tikzpicture}[
        every node/.style={draw=none, inner sep=0.3mm, transform shape},
        edge/.style={thin, color=gray, dotted},
        shaded/.style={thin, color=lightgray},
        covered/.style={draw, shape=rectangle},
        rejected/.style={draw, shape=strike out},
        >=stealth,
        scale=1
    ]
        
        \begin{scope}
        	\node at (-1, 3) {\large${\cal L}_a$} ;
            \node[covered] (a) at (0, 0) {$\{ a \}$};
            \node[shaded] (b) at (1, 0) {$\{ b \}$};
            \node[shaded] (c) at (2, 0) {$\{ c \}$};
            \node[shaded] (d) at (3, 0) {$\{ d \}$};

            \node[rejected] (ab) at (-1, 1) {$\{ a, b \}$};
            \node[covered] (ac) at (0, 1) {$\{ a, c \}$};
            \node[covered] (ad) at (1, 1) {$\{ a, d \}$};
            \node[shaded] (bc) at (2, 1) {$\{ b, c \}$};
            \node[shaded] (bd) at (3, 1) {$\{ b, d \}$};
            \node[shaded] (cd) at (4, 1) {$\{ c, d \}$};

            \node[rejected] (abc) at (-0.5, 2) {$\{ a, b, c \}$};
            \node[rejected] (abd) at (0.75, 2) {$\{ a, b, d \}$};
            \node[covered] (acd) at (2.25, 2) {$\{ a, c, d \}$};
            \node[shaded] (bcd) at (3.5, 2) {$\{ b, c, d \}$};

            \node[shaded] (abcd) at (1.5, 3) {$\{ a, b, c, d \}$};

            \draw[edge] (a.north) edge (ab.south) edge (ac.south) edge (ad.south);
            \draw[edge] (b.north) edge (ab.south) edge (bc.south) edge (bd.south);
            \draw[edge] (c.north) edge (ac.south) edge (bc.south) edge (cd.south);
            \draw[edge] (d.north) edge (ad.south) edge (bd.south) edge (cd.south);

            \draw[edge] (ab.north) edge (abc.south) edge (abd.south);
            \draw[edge] (ac.north) edge (abc.south) edge (acd.south);
            \draw[edge] (ad.north) edge (abd.south) edge (acd.south);
            \draw[edge] (bc.north) edge (abc.south) edge (bcd.south);
            \draw[edge] (bd.north) edge (abd.south) edge (bcd.south);
            \draw[edge] (cd.north) edge (acd.south) edge (bcd.south);

            \draw[edge] (abc.north) edge (abcd.south);
            \draw[edge] (abd.north) edge (abcd.south);
            \draw[edge] (acd.north) edge (abcd.south);
            \draw[edge] (bcd.north) edge (abcd.south);

            \draw[Cerulean] (a) edge[->, bend left] node[midway, left, inner sep=2mm] {\small$F_Q()$} (ab) ;
            \coordinate (start_rej_a) at (abc.320) ;
        \end{scope}
        
        \begin{scope}[shift={(6.75, 0)}]
        	\node at (-1, 3) {\large${\cal L}_c$} ;
            \node[shaded] (a) at (0, 0) {$\{ a \}$};
            \node[shaded] (b) at (1, 0) {$\{ b \}$};
            \node[covered] (c) at (2, 0) {$\{ c \}$};
            \node[shaded] (d) at (3, 0) {$\{ d \}$};

            \node[shaded] (ab) at (-1, 1) {$\{ a, b \}$};
            \node[covered] (ac) at (0, 1) {$\{ a, c \}$};
            \node[shaded] (ad) at (1, 1) {$\{ a, d \}$};
            \node[covered] (bc) at (2, 1) {$\{ b, c \}$};
            \node[shaded] (bd) at (3, 1) {$\{ b, d \}$};
            \node[rejected] (cd) at (4, 1) {$\{ c, d \}$};

            \node[covered] (abc) at (-0.5, 2) {$\{ a, b, c \}$};
            \node[shaded] (abd) at (0.75, 2) {$\{ a, b, d \}$};
            \node[rejected] (acd) at (2.25, 2) {$\{ a, c, d \}$};
            \node[rejected] (bcd) at (3.5, 2) {$\{ b, c, d \}$};

            \node[shaded] (abcd) at (1.5, 3) {$\{ a, b, c, d \}$};
            
            \draw[edge] (a.north) edge (ab.south) edge (ac.south) edge (ad.south);
            \draw[edge] (b.north) edge (ab.south) edge (bc.south) edge (bd.south);
            \draw[edge] (c.north) edge (ac.south) edge (bc.south) edge (cd.south);
            \draw[edge] (d.north) edge (ad.south) edge (bd.south) edge (cd.south);

            \draw[edge] (ab.north) edge (abc.south) edge (abd.south);
            \draw[edge] (ac.north) edge (abc.south) edge (acd.south);
            \draw[edge] (ad.north) edge (abd.south) edge (acd.south);
            \draw[edge] (bc.north) edge (abc.south) edge (bcd.south);
            \draw[edge] (bd.north) edge (abd.south) edge (bcd.south);
            \draw[edge] (cd.north) edge (acd.south) edge (bcd.south);

            \draw[edge] (abc.north) edge (abcd.south);
            \draw[edge] (abd.north) edge (abcd.south);
            \draw[edge] (acd.north) edge (abcd.south);
            \draw[edge] (bcd.north) edge (abcd.south);
            
            \draw[Cerulean] (c) edge[->, bend right=15] node[midway, above left] {\small$F_Q()$} (cd) ;
            \coordinate (start_rej_c) at (acd.west) ;
        \end{scope}
        
		\begin{scope}[shift={(7, -4)}]
        	\node at (-1, 3) {\large${\cal L}_{ac}$} ;
            \node[shaded] (a) at (0, 0) {$\{ a \}$};
            \node[shaded] (b) at (1, 0) {$\{ b \}$};
            \node[shaded] (c) at (2, 0) {$\{ c \}$};
            \node[shaded] (d) at (3, 0) {$\{ d \}$};

            \node[shaded] (ab) at (-1, 1) {$\{ a, b \}$};
            \node[covered] (ac) at (0, 1) {$\{ a, c \}$};
            \node[shaded] (ad) at (1, 1) {$\{ a, d \}$};
            \node[shaded] (bc) at (2, 1) {$\{ b, c \}$};
            \node[shaded] (bd) at (3, 1) {$\{ b, d \}$};
            \node[shaded] (cd) at (4, 1) {$\{ c, d \}$};

            \node[rejected] (abc) at (-0.5, 2) {$\{ a, b, c \}$};
            \node[shaded] (abd) at (0.75, 2) {$\{ a, b, d \}$};
            \node[rejected] (acd) at (2.25, 2) {$\{ a, c, d \}$};
            \node[shaded] (bcd) at (3.5, 2) {$\{ b, c, d \}$};

            \node[shaded] (abcd) at (1.5, 3) {$\{ a, b, c, d \}$};
            
            \draw[edge] (a.north) edge (ab.south) edge (ac.south) edge (ad.south);
            \draw[edge] (b.north) edge (ab.south) edge (bc.south) edge (bd.south);
            \draw[edge] (c.north) edge (ac.south) edge (bc.south) edge (cd.south);
            \draw[edge] (d.north) edge (ad.south) edge (bd.south) edge (cd.south);

            \draw[edge] (ab.north) edge (abc.south) edge (abd.south);
            \draw[edge] (ac.north) edge (abc.south) edge (acd.south);
            \draw[edge] (ad.north) edge (abd.south) edge (acd.south);
            \draw[edge] (bc.north) edge (abc.south) edge (bcd.south);
            \draw[edge] (bd.north) edge (abd.south) edge (bcd.south);
            \draw[edge] (cd.north) edge (acd.south) edge (bcd.south);

            \draw[edge] (abc.north) edge (abcd.south);
            \draw[edge] (abd.north) edge (abcd.south);
            \draw[edge] (acd.north) edge (abcd.south);
            \draw[edge] (bcd.north) edge (abcd.south);
            
            \draw[Red, ->] (start_rej_a) .. controls +(4, -0.5) and +(-2, 0) .. (abc) ;
            \draw[Red, ->] (start_rej_c) .. controls +(-1, 0) and +(-2, 0) .. (acd) ;
        \end{scope}
        
		\begin{scope}[shift={(3.25, -8)}]
            \node[covered] (a) at (0, 0) {$\{ a \}$};
            \node[shaded] (b) at (1, 0) {$\{ b \}$};
            \node[covered] (c) at (2, 0) {$\{ c \}$};
            \node[shaded] (d) at (3, 0) {$\{ d \}$};

            \node[rejected] (ab) at (-1, 1) {$\{ a, b \}$};
            \node[covered] (ac) at (0, 1) {$\{ a, c \}$};
            \node[covered] (ad) at (1, 1) {$\{ a, d \}$};
            \node[covered] (bc) at (2, 1) {$\{ b, c \}$};
            \node[shaded] (bd) at (3, 1) {$\{ b, d \}$};
            \node[rejected] (cd) at (4, 1) {$\{ c, d \}$};

            \node[covered] (abc) at (-0.5, 2) {$\{ a, b, c \}$};
            \node[rejected] (abd) at (0.75, 2) {$\{ a, b, d \}$};
            \node[covered] (acd) at (2.25, 2) {$\{ a, c, d \}$};
            \node[rejected] (bcd) at (3.5, 2) {$\{ b, c, d \}$};

            \node[shaded] (abcd) at (1.5, 3) {$\{ a, b, c, d \}$};
            
            \draw[edge] (a.north) edge (ab.south) edge (ac.south) edge (ad.south);
            \draw[edge] (b.north) edge (ab.south) edge (bc.south) edge (bd.south);
            \draw[edge] (c.north) edge (ac.south) edge (bc.south) edge (cd.south);
            \draw[edge] (d.north) edge (ad.south) edge (bd.south) edge (cd.south);

            \draw[edge] (ab.north) edge (abc.south) edge (abd.south);
            \draw[edge] (ac.north) edge (abc.south) edge (acd.south);
            \draw[edge] (ad.north) edge (abd.south) edge (acd.south);
            \draw[edge] (bc.north) edge (abc.south) edge (bcd.south);
            \draw[edge] (bd.north) edge (abd.south) edge (bcd.south);
            \draw[edge] (cd.north) edge (acd.south) edge (bcd.south);

            \draw[edge] (abc.north) edge (abcd.south);
            \draw[edge] (abd.north) edge (abcd.south);
            \draw[edge] (acd.north) edge (abcd.south);
            \draw[edge] (bcd.north) edge (abcd.south);
        \end{scope}
        
        \node (plus) at (4.75, 1.5) {\Huge$+$} ;
        \node (minus) at (-1, -2.5) {\Huge$-$} ;
        \draw[very thick] (-1.5, -4.5) -- (11.5, -4.5) ;
        \node (minus) at (-1, -6.5) {\Huge$=$} ;
    \end{tikzpicture}
    
    \caption{Calculation of the number of positive bags covered engendered by $A_1 = \{ a, c \}$ and covered by $Q$, with $E = \{ a, b, c, d \}$, $F_1^* = E$, $F_Q(\{a\}) = \{ a, b \}$ and $F_Q(\{c\}) = \{ c, d \}$.}
    \label{fig:equivposbags}
\end{figure}

\subsubsection{Number of negative bags covered by a solution}

It is much more simpler to compute the number of covered negative bags (The false positives count). This is mainly due to their reduced number: each element $x \in E$ engenders $|E \setminus F^*(\{x\})|$ negatives bags. That is to say one bag per element that does not belong to the closure of $\{x\}$.

Given a DNF $Q$, we compute $\forall x \in E$ its elementary closure $F_Q(\{x\})$. The number of negative bags covered by $Q$ for a given element $x \in E$ is $|F_Q(\{x\}) \setminus F^*(\{x\})|$, that is to say the number of retrieved element which were not expected.
Hence, the total number of negative bags covered by a solution $Q$ is expressed by
\begin{equation}
	B^-_Q = \sum_{x \in E} |F_Q(\{x\}) \setminus F^*(\{x\})|
\end{equation}

\subsection{The intrinsic quality measure}

Based on the count of positive and negative bags covered by a DNF $Q$, we define a new quality measure taking into account both the retrieved elementary closed sets (as the extrinsic quality measure) and the internal behavior of the V-type pretopological space $(E, a_Q)$.

We designed our quality measure such that DNFs exposing a high number of covered positive bags and a low number of covered negative bags are highlighted. Because the positive and negative bags count do not have the same order of magnitude, we consider the $log_2$ of the number of covered positive bags. Our quality measure is inspired from the \emph{tozero score} defined by \citet{DBLP:conf/icml/BlockeelPS05} and is defined by
\begin{equation}
	h(Q) = \frac{log_2(B^+_Q)}{log_2(B^+_Q) + B^-_Q + p}
\end{equation}
Where $p$ is \emph{a parameter that influences how strongly the measure is pulled toward zero} \citep{DBLP:conf/icml/BlockeelPS05}.

We call it an intrinsic quality measure because it takes into account the result of the closure of any set included in the learned elementary closed sets, as opposed to the extrinsic quality measure that consider only the elementary closed sets.

\subsection{Multiple Instance LPS}

MI LPS is a multiple instance variation of LPS. It takes as input a set of target elementary closed sets $S^*$ and a collection of $k$ neighborhoods ${\cal V} = \{ V_1, \ldots, V_k \}$, and outputs a positive DNF $Q$ such that the elementary closed sets provided by the pseudo-closure operator $a_Q(.)$ fit best with to $S^*$.

MI LPS works in an iterative fashion, it builds a positive DNF by appending a new disjunctive clause to the DNF $Q_i$ built at the previous iteration. MI LPS is implemented in the same spirit as Greedy LPS, they mostly differ from the quality measure they are based on. It chooses the best clause based on its intrinsic quality measure instead of the extrinsic one. Thus we do not show any pseudo-code of MI LPS because it would be the same as Greedy LPS (algorithm \ref{algo:greedy}) but with the extrinsic measure replaced by the intrinsic measure. Since the intrinsic quality measure is based on the number of covered bags (by a potential solution $Q$), it is by itself the reason why MI LPS belongs to the MI framework.


    
        
        	
            
            

    
    
    \section{Experiments}\label{sec:expe}

We build a method to learn a propagation model based on the so called intrinsic measure, as an opposition to the extrinsic measure used in the previous original contributions. We assume that, because the intrinsic measure takes into account propagations on subsets of element (and not only on singletons) it captures the quality and the potential of a model under construction and is more suitable than the extrinsic measure. In order to validate the interestingness of the intrinsic measure and the efficiency of the associated MI approach, we propose an experimental validation. This experiment consists in learning a propagation phenomenon, that models a forest fire. On these simulations proposed, our aim is to evaluate how each LPS approach is able to recognize the (true) underlying pretopological model.

We consider a rectangular grid containing a set of cells $E$. We also consider a collection of eight neighborhoods ${\cal V} = \{ V_1, \ldots, V_8\}$ corresponding to the eights Moore's neighborhoods showed in Figure \ref{fig:moore}. Given a known propagation model $a_{Q^*}(.)$, we build the (V-type) pretopological space $(E, a_{Q^*})$. Our goal is to learn a positive DNF $Q$ such that $a_{Q^*}(.)$ and $a_{Q}(.)$ output similar closed sets. 

We built three different models wisely named according to our estimation of the difficulty to learn them:
\begin{itemize}
	\item $Q^*_{Simple} = q_4 \vee q_6 \vee q_7$
	\item $Q^*_{Medium} = (q_4 \wedge q_6) \vee (q_5 \wedge q_8) \vee q_7$
	\item $Q^*_{Hard} = q_3 \vee q_5 \vee (q_2 \wedge q_4) \vee (q_4 \wedge q_7) \vee (q_6 \wedge q_7 \wedge q_8)$
\end{itemize}
We then derive three pseudo-closure operators ($a_{Q^*_{Simple}}$, $a_{Q^*_{Medium}}$ and $a_{Q^*_{Hard}}$) to learn from these DNF (${Q^*_{Simple}}$, ${Q^*_{Medium}}$ and ${Q^*_{Hard}}$ respectively). Each pseudo-closure operator is used to model the propagation of a forest fire based on a percolation process \citep{DBLP:conf/rivf/AmorLL07}. Each DNF models a different propagation process influenced by the wind. For example, $Q^*_{Simple}$ models a forest fire under the influence of a south-west wind: a cell $x \in E$ burns when a cell located at its north, east or north-east is burning, so the fire propagates toward the south-west corner of the grid. Figure \ref{fig:forestfire} shows the propagations obtained with $Q^*_{Simple}$, $Q^*_{Medium}$ and $Q^*_{Hard}$, we clearly observe the fires modeled by $Q^*_{Simple}$ propagating toward the south-west corner.
%
%

For each of these models, we built seven 15x15 grids ${\cal G} = \{ G_0, G_{10}, G_{20}, G_{30}, G_{40}, G_{50}, G_{60} \}$ filled with a growing percentage of obstructed cells. An obstructed cell is a cell $x \in E$ such that $\forall V_i \in {\cal V}, V_i(x) = \{x\}$, that is to say a cell that cannot be expanded by a pseudo-closure operator. So $G_0$ contains 0\% of obstructed cells, $G_{10}$ contains 10\% of obstructed cells, and so on. We should mention that the obstructed cells in $G_{i+1}$ are those in $G_i$ plus 10\% more. Then, for each grid $G_i \in {\cal G}$, we simulated the propagation of a fire ignited in each cell $x \in G_i$. The resulting set of burnt cells was then assigned to the value of $F_{Q^*}(\{x\})$. Some of those simulations are showed in Figure \ref{fig:forestfire}. The green cells are combustible cells, the gray ones are obstacles (i.e. obstructed cells). the salmon cell is the starting point of the fire and the red cells are the burnt cells.

\begin{figure}
    \centering
    \begin{tikzpicture}[scale=0.8]
        \foreach \x in {1, 2, 3}
        \foreach \y in {1, 2, 3}{
        \fill[orange] (\x, \y) rectangle (\x + 1, \y + 1) ;
        }
        \fill[pink] (2, 2) rectangle (3, 3) ;
        \draw[thick] (0, 0) grid (5, 5) ;

        \node[draw=none] at (1.5, 3.5) {$V_1$} ;
        \node[draw=none] at (1.5, 2.5) {$V_2$} ;
        \node[draw=none] at (1.5, 1.5) {$V_3$} ;
        \node[draw=none] at (2.5, 3.5) {$V_4$} ;
        \node[draw=none] at (2.5, 1.5) {$V_5$} ;
        \node[draw=none] at (3.5, 3.5) {$V_6$} ;
        \node[draw=none] at (3.5, 2.5) {$V_7$} ;
        \node[draw=none] at (3.5, 1.5) {$V_8$} ;
    \end{tikzpicture}

    \caption{Moore's neighborhoods}
    \label{fig:moore}
\end{figure}
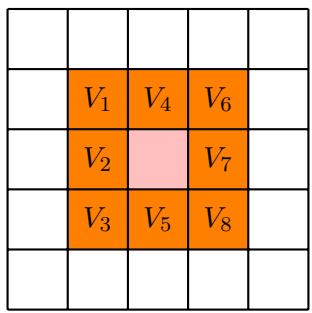

\begin{figure}
    \centering
	\begin{tikzpicture}
		\matrix[matrix of nodes] (m) {
            \includegraphics[width=3.5cm]{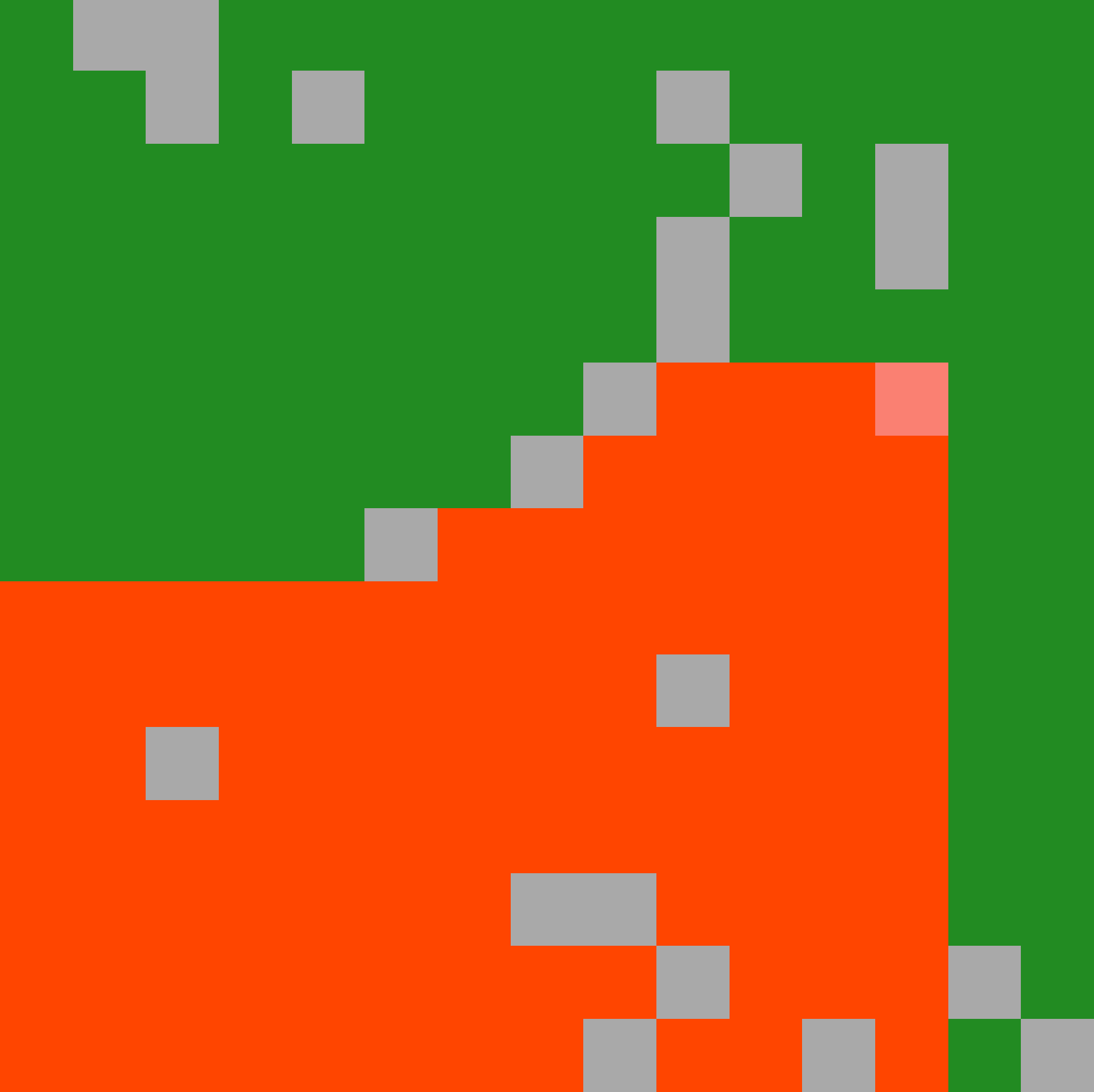} & 
            \includegraphics[width=3.5cm]{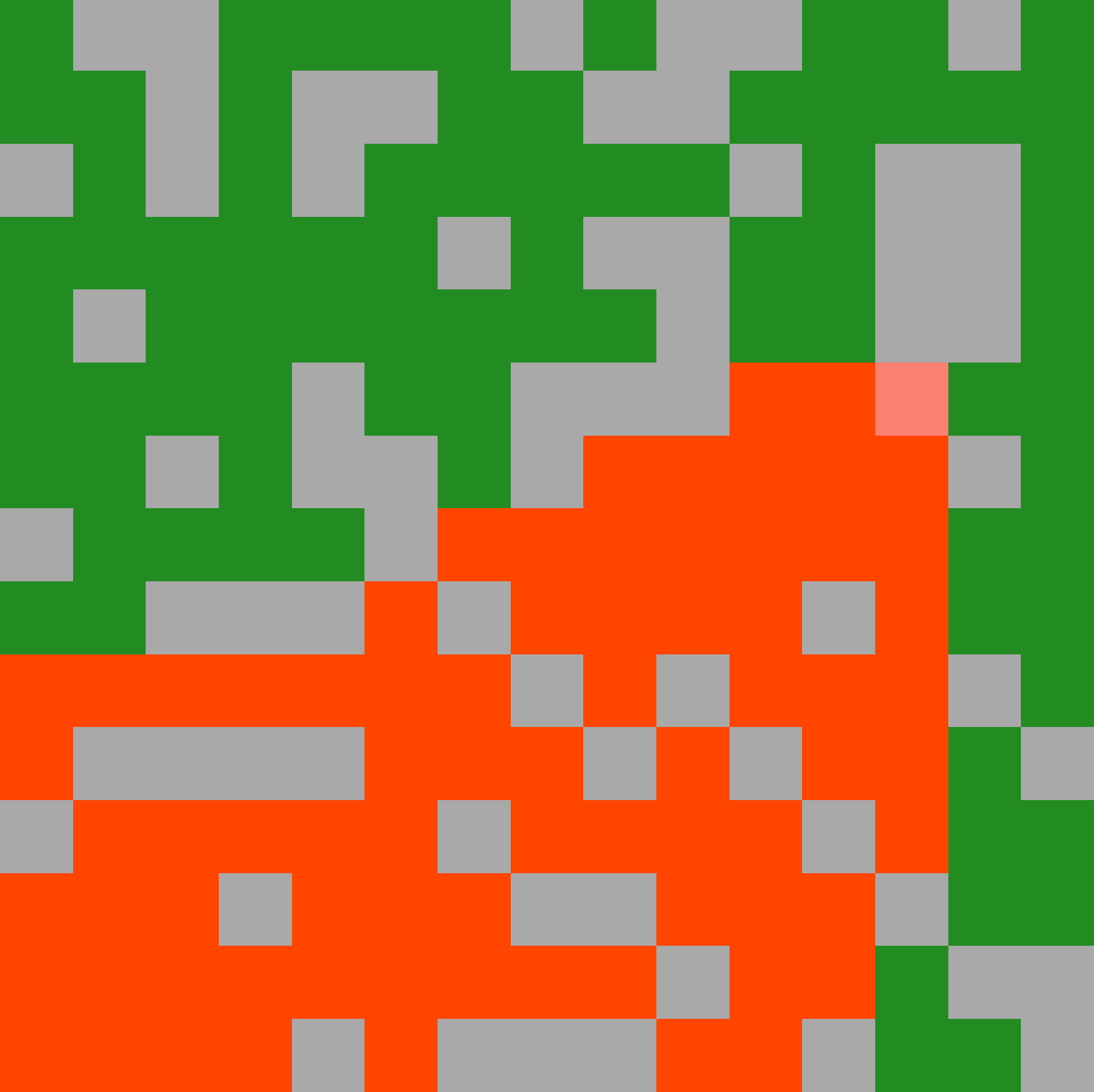} & 
            \includegraphics[width=3.5cm]{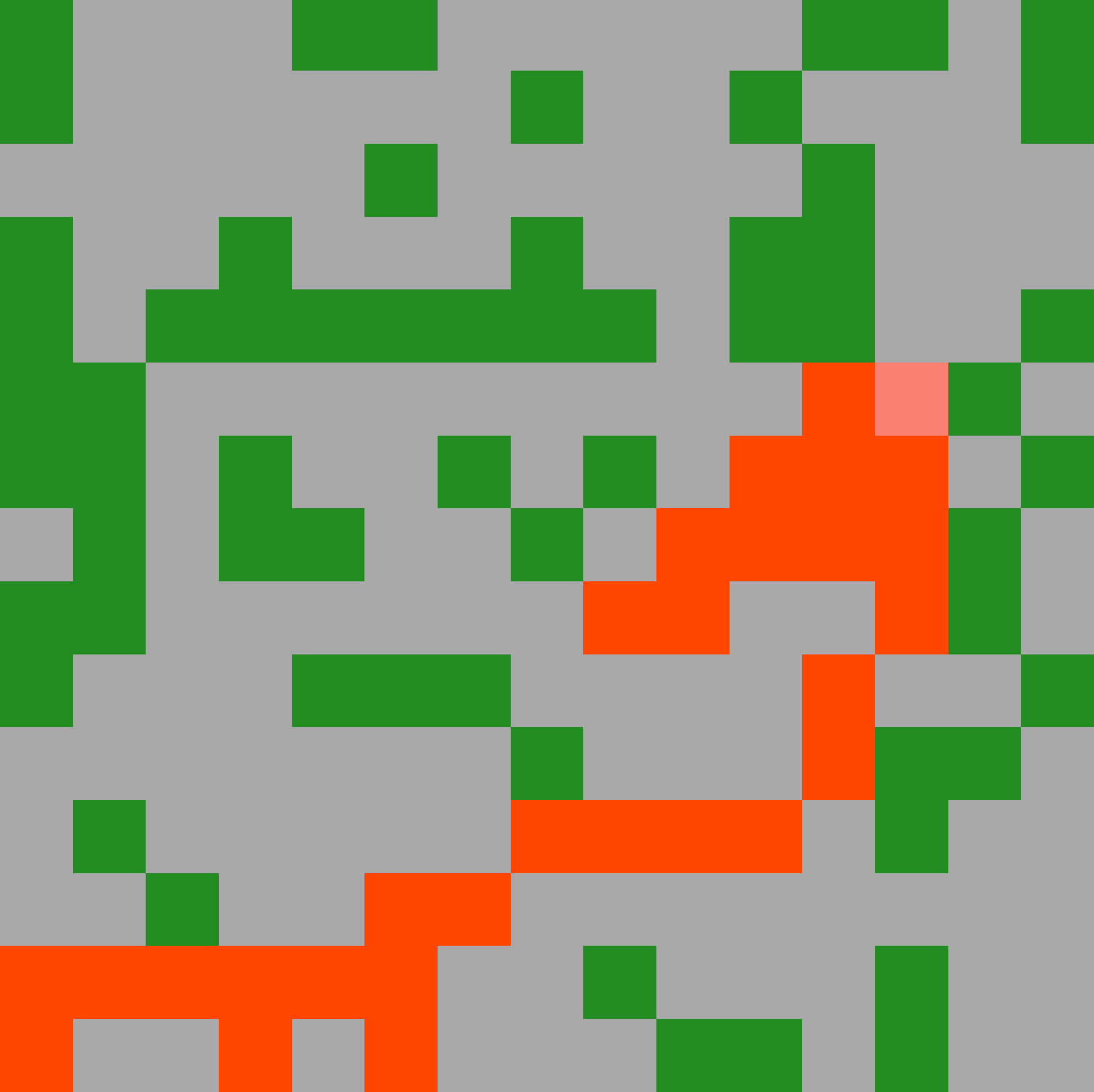} \\

            \includegraphics[width=3.5cm]{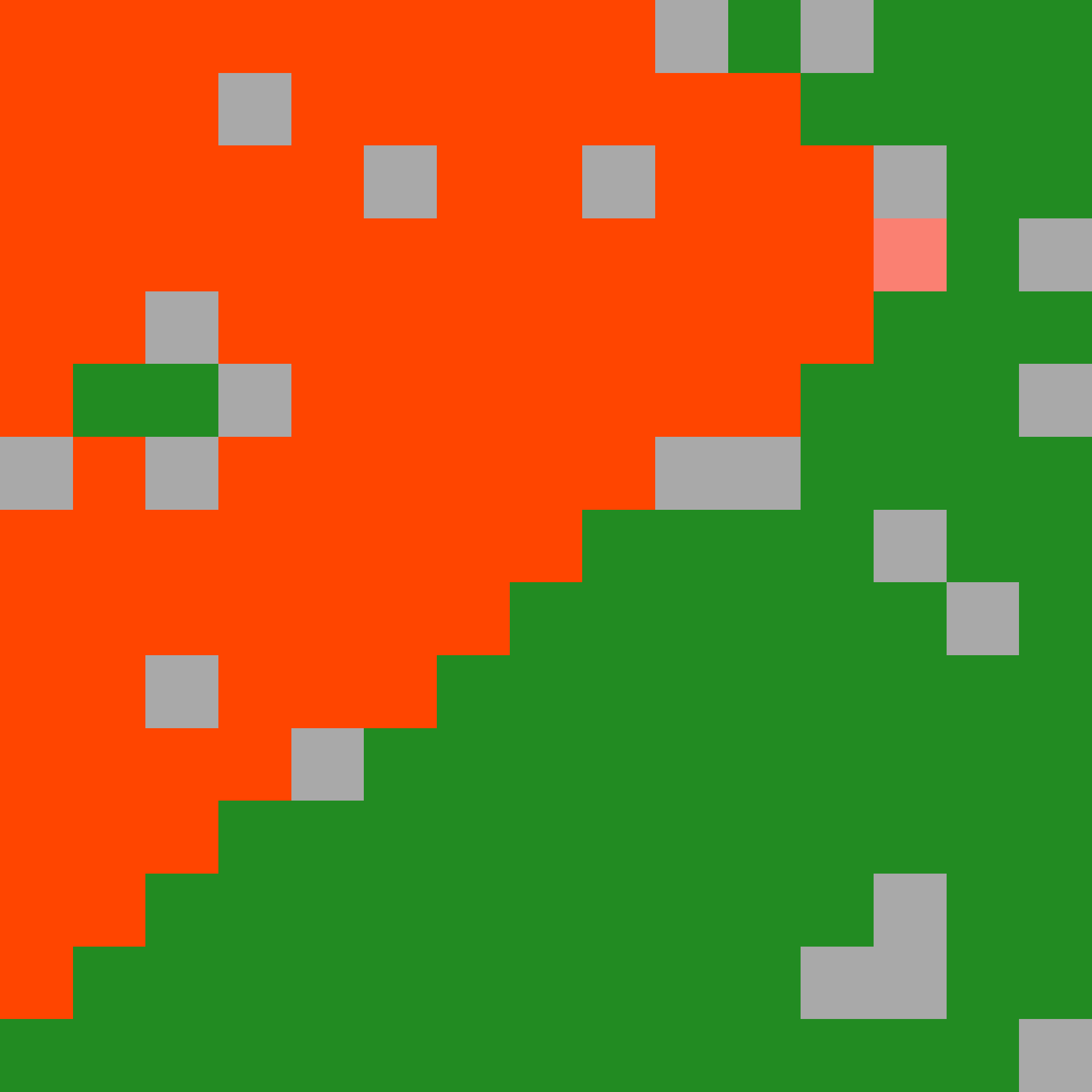} & 
            \includegraphics[width=3.5cm]{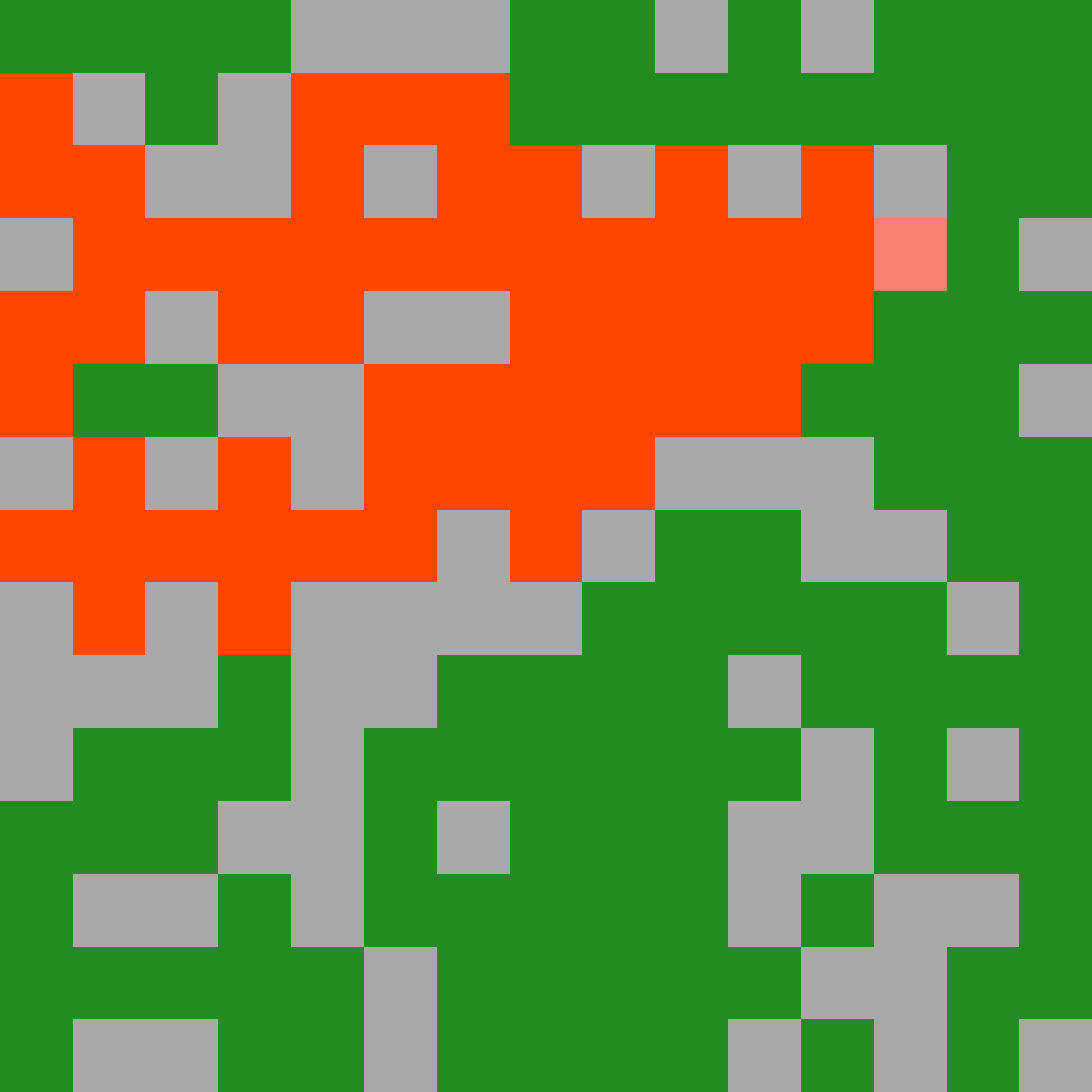} & 
            \includegraphics[width=3.5cm]{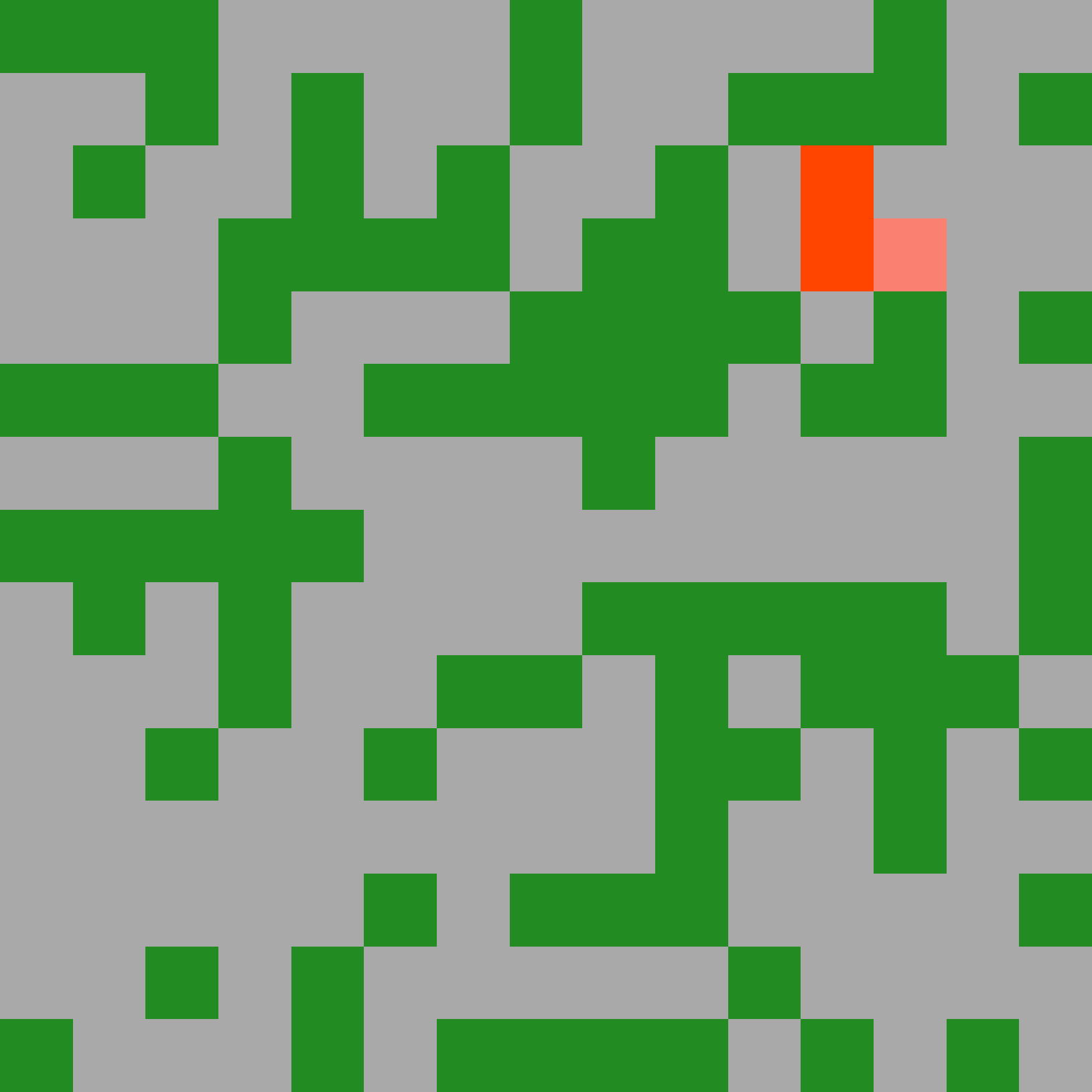} \\

            \includegraphics[width=3.5cm]{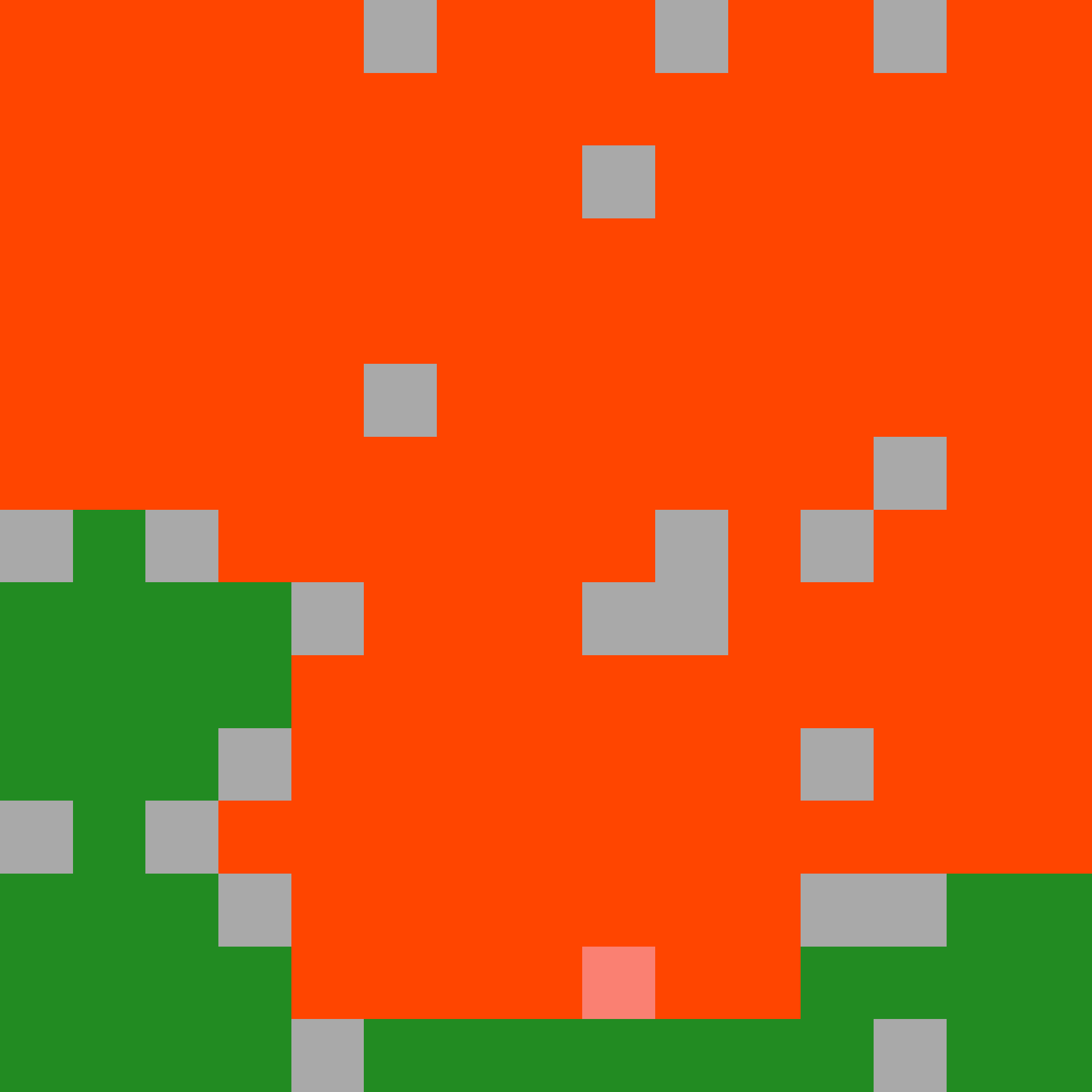} & 
            \includegraphics[width=3.5cm]{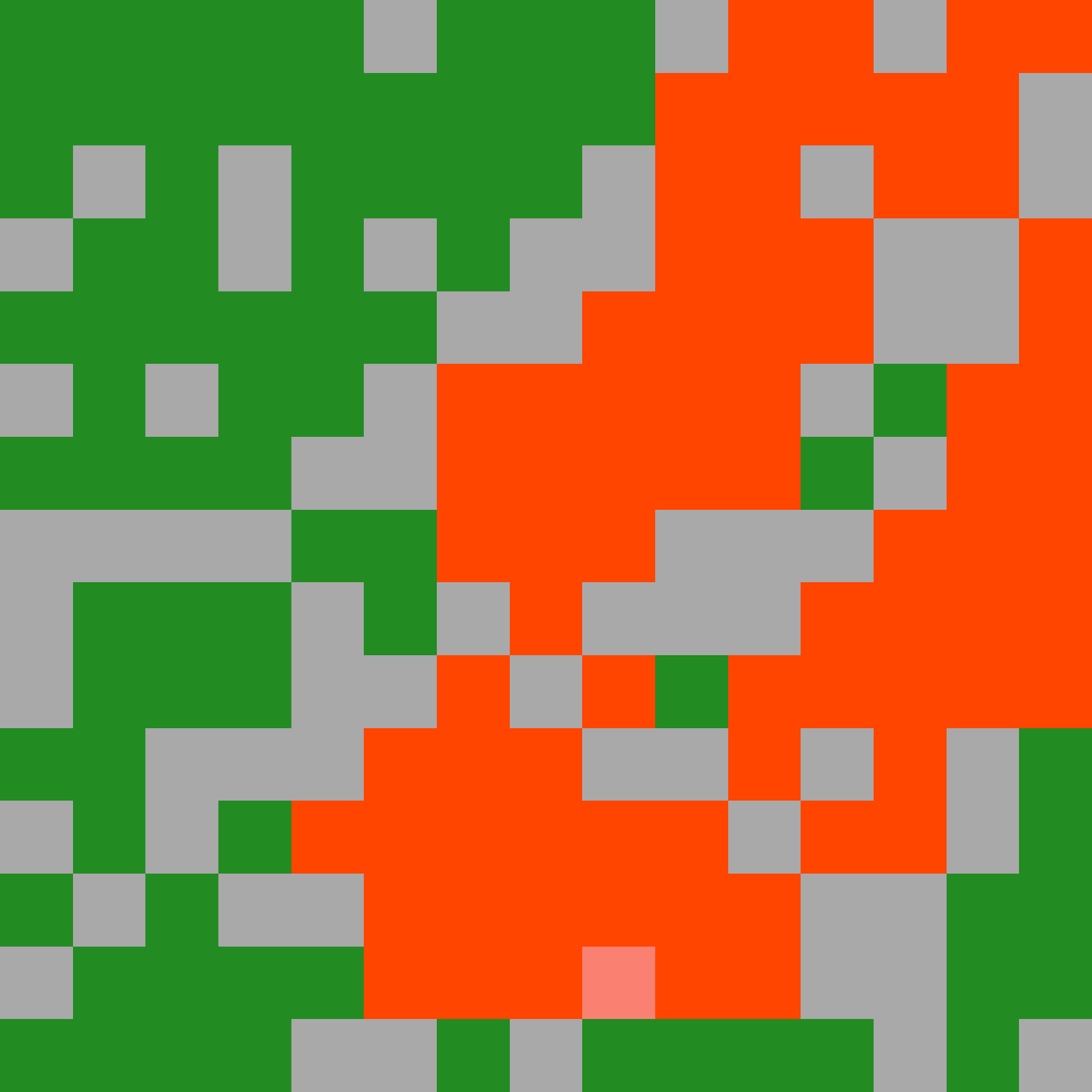} & 
            \includegraphics[width=3.5cm]{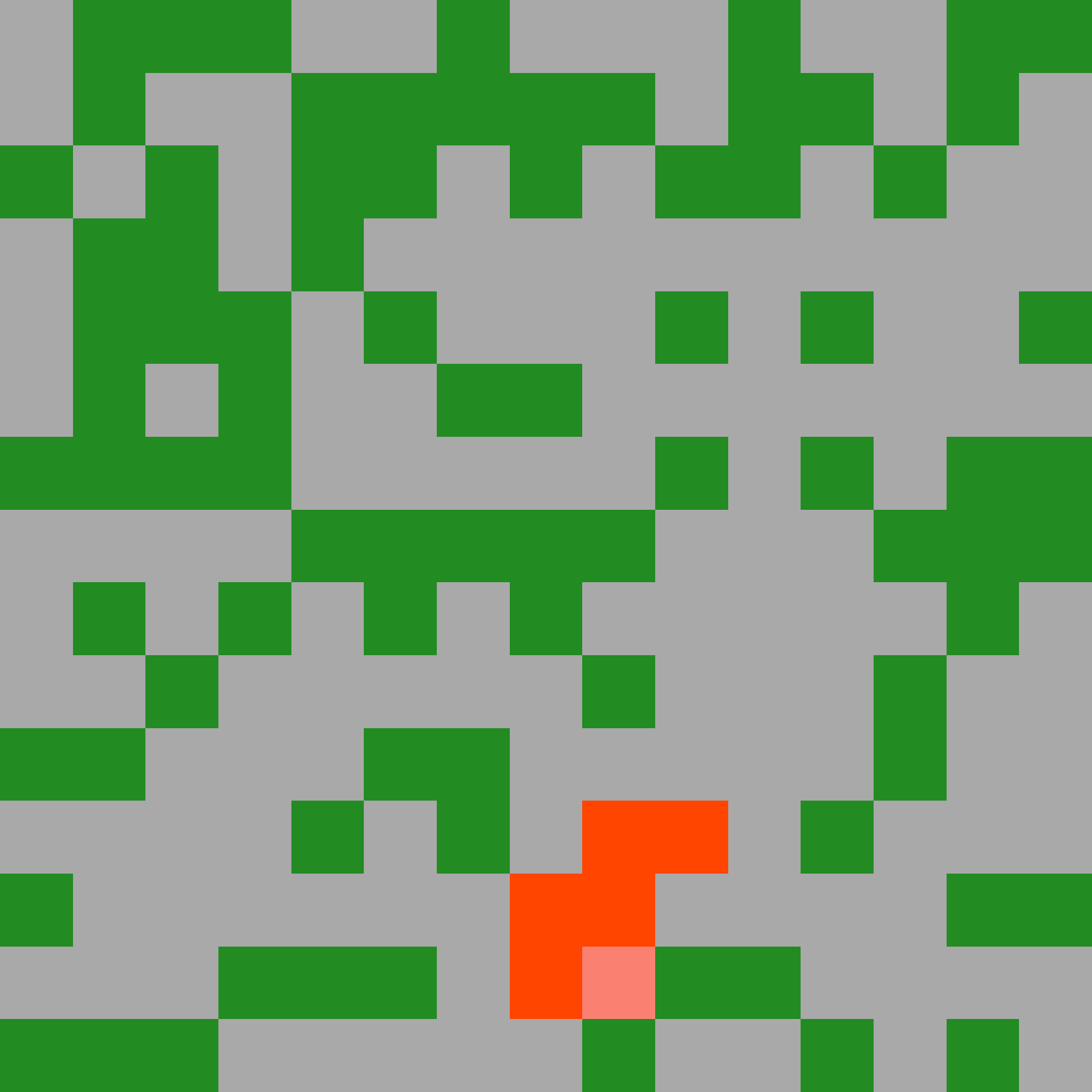} \\
        } ;
        
        \matrix[below=0.1cm of m] (legend) {
            \node[anchor=east,fill=treegreen, minimum width=0.15cm, minimum height=0.15cm] (trees_square) {} ; &
            \node[anchor=west] (trees_text) {\small Trees} ; &[1em]

            \node[anchor=east,fill=firered, minimum width=0.15cm, minimum height=0.15cm] (fire_square) {} ; &
            \node[anchor=west] (fire_text) {\small Fire} ; &[1em]

            \node[anchor=east, fill=obstaclegray, minimum width=0.15cm, minimum height=0.15cm] (obstacles_square) {} ; &
            \node[anchor=west] (obstacles_text) {\small Obstacles} ; &[1em]

            \node[anchor=east, fill=emitterpink, minimum width=0.15cm, minimum height=0.15cm] (emitter_square) {} ; &
            \node[anchor=west] (emitter_text) {\small Origin} ; \\
		} ;
	\end{tikzpicture}
    
    \caption{From left to right: 10\%, 30\% and 60\% obstacles, from top to bottom $Q_{Simple}$, $Q_{Medium}$ and $Q_{Hard}$}
    \label{fig:forestfire}
\end{figure}

For each experiment we picked 30\% of the inflammable cells and compute their propagation with the proposed model in order to build the set of target elementary closed sets $S^*$. Then $S^*$ and $\cal V$ were feed to the input of each of the LPS approaches: Genetic LPS (numerical and logical formalisms), Greedy LPS and MI LPS in order to compare their results. Each algorithm was tested with two sets of parameters: both Genetic LPS were tested with an initial population size of 100 and 500 ; Greedy LPS and MI LPS were tested with a beam search size of 1 and 5.
Lastly, each experiment was realized ten times, so for each $Q \in \{ Q^*_{simple}, Q^*_{medium}, Q^*_{hard} \}$, we built ten times a collection of seven grids. Figure \ref{fig:plots} shows the average results of each experiment: the F-measure obtained is indicated on the ordinate axis and the execution time is proportional to the size of each point (the smaller the better).


The first result is to notice that LPS MI obtains the best score most of the time. We also remark that setting a higher beam size results in better scores in exchange of an insignificant increase of the execution time: with a beam search of size 5, MI LPS is able to retrieve the whole target solution without errors (its F-measure equals 1)!
Greedy LPS is the second best approach in term of F-measure and offers quite similar performance. However, the scores obtained by Greedy LPS does not seem to be influenced by the size of its beam search, so unlike LPS MI, we may not hope to obtain better results by simply enlarging the beam.
The quality of the output of Logical Genetic LPS is highly sensible to the size of the initial population. Thus we may expect excellent results if its population is large enough. But its execution time is also extremely sensible to this parameter, in some cases, it took hours to obtain a result whereas it was a matter of seconds for both Greedy and MI LPS. 
Finally, Numerical Genetic LPS performed the worst in terms of both F-measure and execution time. The only difference between Numerical Genetic LPS and Logical Genetic LPS is the model they learned, the first one learns a weight vector while the last one learns a positive DNF. So we impute its poor performance to the unadapted weighting vector approach.

This experiments illustrates and confirms clearly the motivation of the present contribution: first, a positive DNF is more appropriate than a numerical vector to model an expansion process ; then, it shows that while genetic algorithms have proven their worth, their prerequisites (a huge initial population) are too important in our context ; Finally, we compare two greedy algorithms: MI LPS and Greedy LPS which both learn a positive DNF but in different contexts. MI LPS is a multiple instance algorithm while Greedy LPS is a classical supervised algorithm; they use different quality measures: MI LPS choosing the clause maximizing an intrinsic measure while Greedy LPS using an extrinsic measure. This experiment shows that the MI approach is able to retrieve a target model more easily than a more classical approach, whatever the complexity of the model to learn.

    

\begin{figure}
	\centering
    \begin{subfigure}[b]{\textwidth}
		\includegraphics[width=\textwidth]{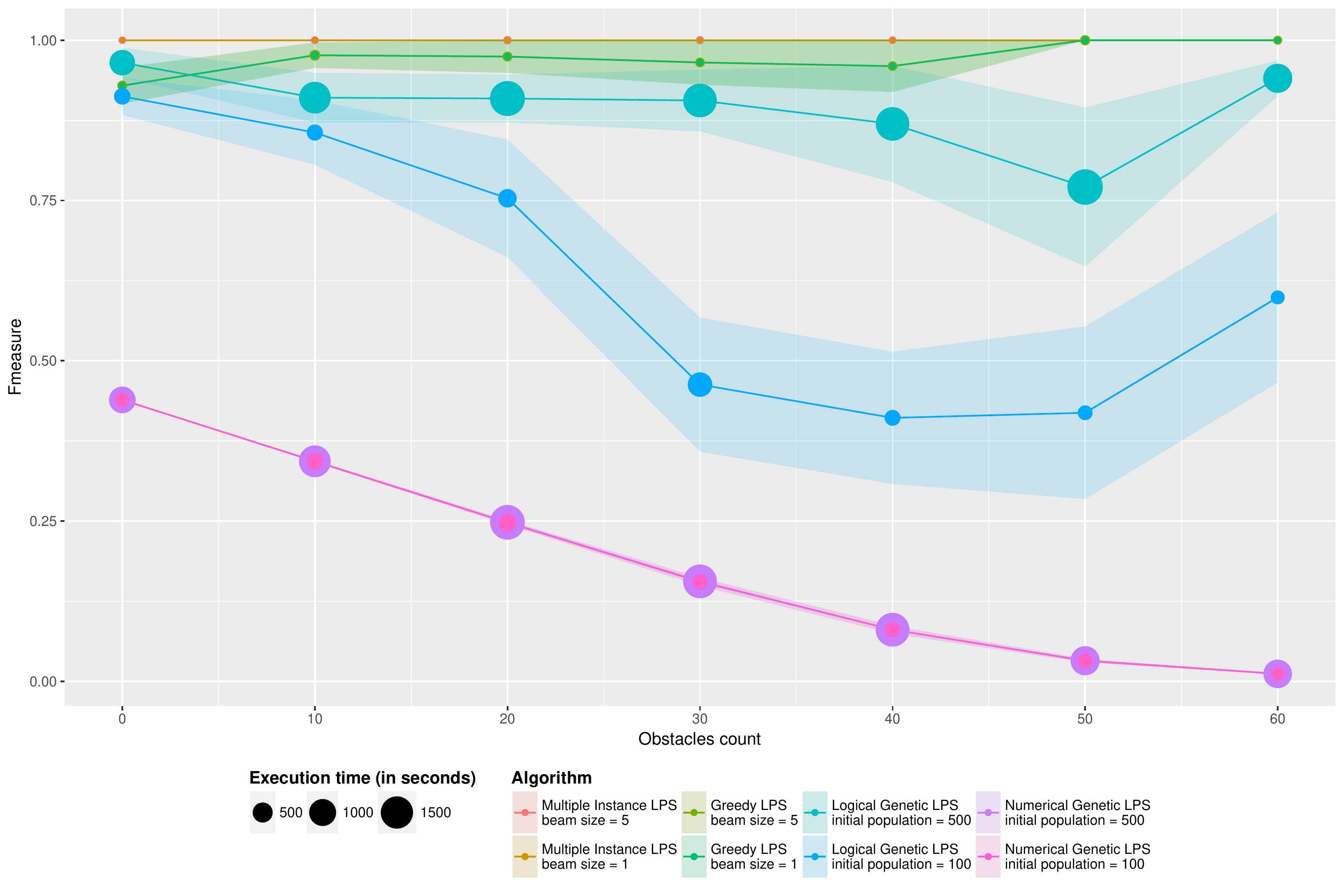}
        \subcaption{$Q_{Simple}$}
	\end{subfigure}
    \begin{subfigure}[b]{\textwidth}
		\includegraphics[width=\textwidth]{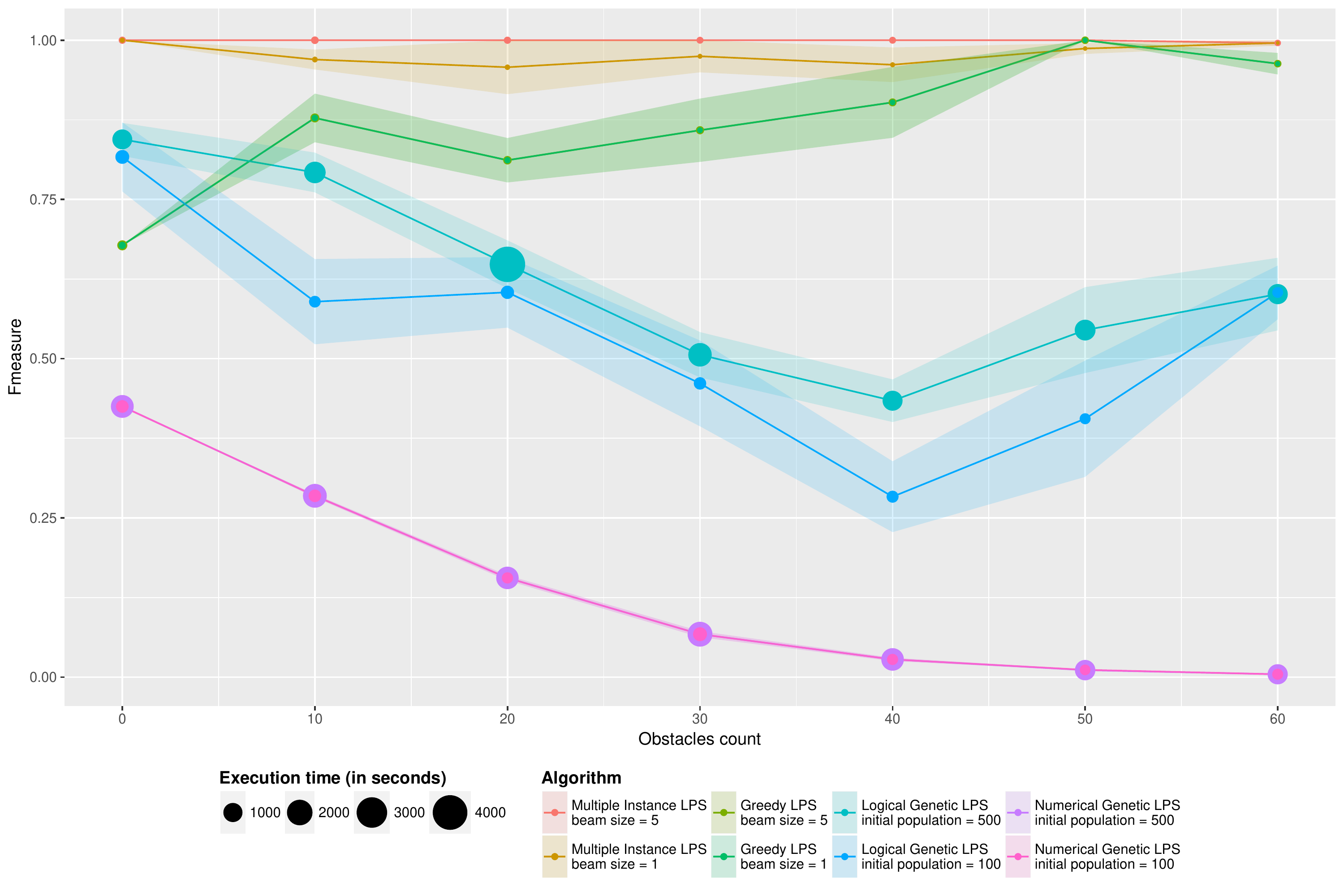}
        \subcaption{$Q_{Medium}$}
	\end{subfigure}
    
    \caption{Mean output's quality of the LPS algorithms with different settings.}
\end{figure}
\begin{figure}\ContinuedFloat
    \begin{subfigure}[b]{\textwidth}
		\includegraphics[width=\textwidth]{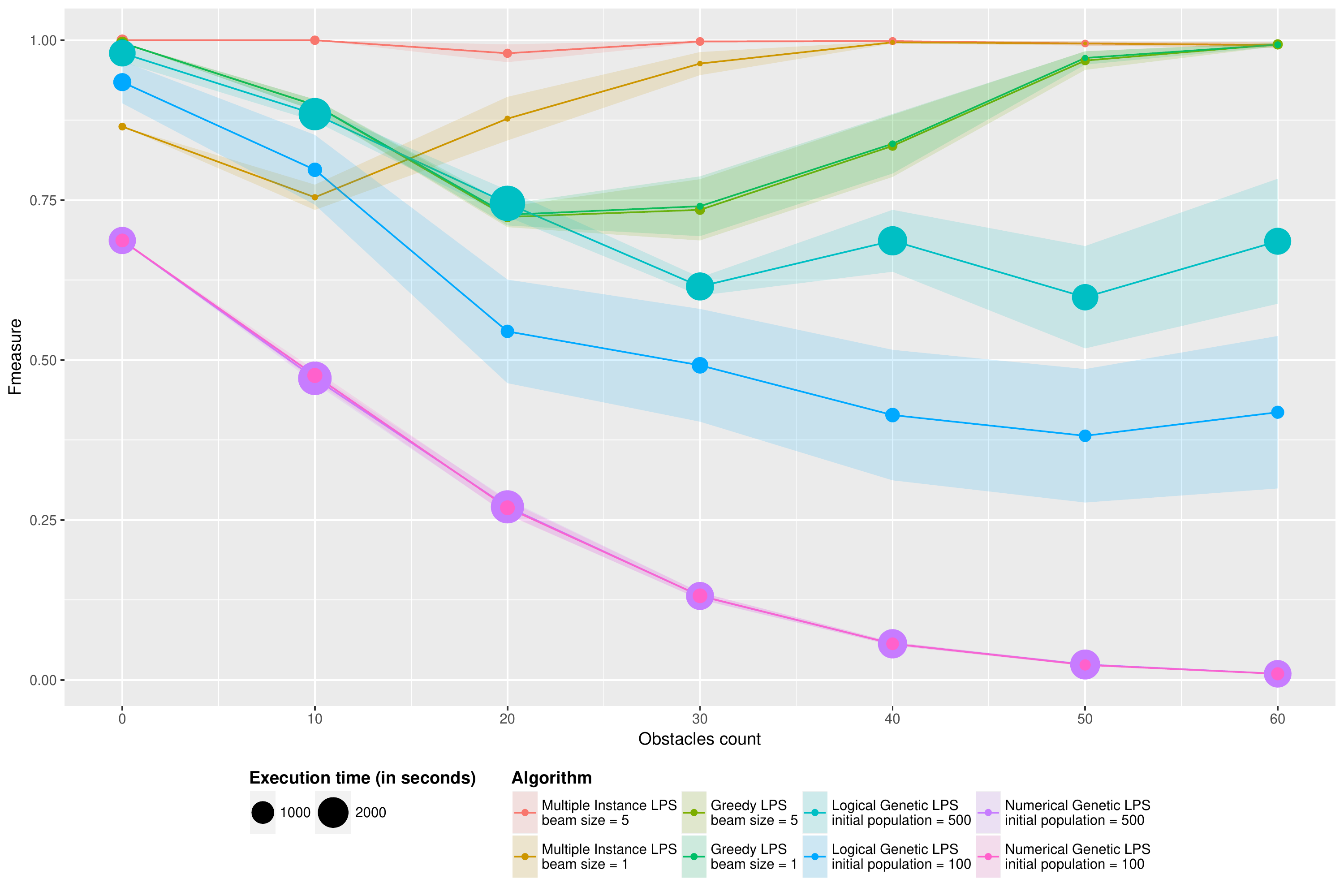}
        \subcaption{$Q_{Hard}$}
	\end{subfigure}
   
    \caption{Mean output's quality of the LPS algorithms with different setting.}
    \label{fig:plots}
\end{figure}

    \section{Conclusion}

We introduce a new method to learn (in a supervised context) a pretopological space based on its elementary closed sets: the Multi-Instance LPS (MI LPS). This method is the continuation of the work done by \citet{cleuziou2015learning} where a pseudo-closure was modeled with a numerical vector and learned using a genetic algorithm. Due to the lack of expressiveness of the numerical formalism, we switch to a more appropriate logical modeling using a positive DNF deriving in a pseudo-closure operator. This representation of a pretopological space allows to consider many more solutions while restraining the hypothesis space, thus making its exploration easier and faster. LPS MI is also faster than its competitors to deliver its output since its learning strategy explores efficiently the search space due to an appropriate (intrinsic) quality measure.

Learning a pretopological space is a wonderful improvement of how we leverage the pretopology theory. In previous works, the pseudo-closure operator was hand fixed and based on a collection of neighborhoods that \emph{seems} to model accurately a given problem/phenomenon. Learning a pseudo-closure operator allows us to build a good combination among any collection of neighborhoods, such that only the useful neighborhoods are considered in the learned pseudo-closure operator.

Future works will investigate other models of pseudo-closure operator. For example, a pseudo-closure operator based on first logic order formulas could lead to even better expressiveness. We also envisage to learn, in the same time a combination of neighborhoods and a set of parameters defining these neighborhoods (from a proximity matrix) in hopes of reducing the false positive rate of the learned solution.

We also plan to refine our estimation of the positive/negative bags covered by a solution since a better approximation will permit to learn more accurate solutions. We will also apply our algorithm on a broader range of application cases such as the modeling of expansion behaviors in networks (social networks, biological networks, road networks\ldots), for instance, we could learn the model underneath the expansion of a community in a social network. 
And, as it was the problem which motivated the learning of a pretopological space, we will use our method to learn models useful for text mining, such as task of building lexical taxonomies.

    \bibliographystyle{elsarticle-num-names}
    \bibliography{references}
\end{document}